\crefname{table}{Table}{Tables}
\DeclareFontFamily{U}{mathx}{\hyphenchar\font45}
\DeclareFontShape{U}{mathx}{m}{n}{
      <5> <6> <7> <8> <9> <10>
      <10.95> <12> <14.4> <17.28> <20.74> <24.88>
      mathx10
      }{}
\DeclareSymbolFont{mathx}{U}{mathx}{m}{n}
\DeclareMathAccent{\widecheck}{0}{mathx}{"71}
\newtheorem{theorem}{Theorem}
\newtheorem{example}{Example}
\newtheorem{assumption}{Assumption}
\newcommand{\boldh}{\boldsymbol{h}}
\newcommand{\bd}{\boldsymbol{d}}
\newcommand{\bg}{\boldsymbol{g}}
\newcommand{\bs}{\boldsymbol{s}}
\newcommand{\bu}{\boldsymbol{u}}
\newcommand{\bv}{\boldsymbol{v}}
\newcommand{\bw}{\boldsymbol{w}}
\newcommand{\bx}{\boldsymbol{x}}
\newcommand{\bgamma}{\boldsymbol{\gamma}}
\newcommand{\bpsi}{\boldsymbol{\psi}}
\newcommand{\bphi}{\boldsymbol{\phi}}
\newcommand{\bH}{\boldsymbol{H}}
\newcommand{\bR}{\boldsymbol{R}}
\newcommand{\cN}{\mathcal{N}}
\newcommand{\cw}{{\scriptstyle\mathcal{W}}}
\newcommand{\ccw}{{\scriptscriptstyle\mathcal{W}}}
\newcommand{\bcF}{\boldsymbol{\cal{F}}}
\newcommand{\bcw}{\boldsymbol{\cw}}
\newcommand{\bwt}{\widetilde \bw}
\newcommand{\expec}{\mathbb{E}}
\newcommand{\col}{\text{col}}
\newcommand{\diag}{\text{diag}}
\newcommand{\sign}{\text{sign}}
\newcommand{\prox}{\text{prox}}
\newcolumntype{C}[1]{>{\centering\arraybackslash}m{#1}}
\begin{document}
\title{A non-smooth regularization framework \\for learning over multitask graphs}
\author{Yara Zgheib, \IEEEmembership{Student Member, IEEE}, Luca Calatroni, \IEEEmembership{Member, IEEE}\\
 Marc Antonini, \IEEEmembership{Member, IEEE}, Roula Nassif, \IEEEmembership{Member, IEEE} \thanks{

The work of Y. Zgheib and R. Nassif was supported in part by ANR JCJC grant ANR-22-CE23-0015-01 (CEDRO project).

Y. Zgheib, M. Antonini, and R. Nassif are with Universit\'e C\^ote d'Azur, I3S Laboratory, CNRS,  France (email: $\{$yara.zgheib, marc.antonini, roula.nassif$\}$@unice.fr). L. Calatroni is with Machine Learning Genoa Center (MaLGa), Department of Computer Science, University of Genoa, Italy (email: luca.calatroni@unige.it).}}

\maketitle

\begin{abstract}
In this work, we consider learning over multitask graphs, where each agent aims to estimate its own parameter vector, referred to as its \emph{task} or \emph{objective}. Although agents seek distinct objectives, collaboration among them can be beneficial in scenarios where relationships between tasks exist. Among the various approaches to promoting relationships between tasks and, consequently, enhancing collaboration between agents, one notable method is regularization. While previous multitask learning studies have focused on smooth regularization to enforce graph smoothness, this work explores non-smooth regularization techniques that promote sparsity, making them particularly effective in encouraging piecewise constant transitions on the graph. We begin by formulating a global regularized optimization problem, which involves minimizing the aggregate sum of individual costs, regularized by a general non-smooth term designed to promote piecewise-constant relationships between the tasks of neighboring agents. Based on the forward-backward splitting strategy, we propose a decentralized learning approach that enables efficient solutions to the regularized optimization problem. The proposed approach allows for the use of stochastic gradient vectors in place of true gradient vectors, which are typically unavailable in streaming data scenarios. Then, under convexity assumptions on the cost functions and co-regularization, we establish that the proposed approach converges in the mean-square-error sense within $O(\mu)$ (where $\mu$ is the algorithm's step-size) of the optimal solution of the globally regularized cost. For broader applicability and improved computational efficiency, we also derive closed-form expressions for commonly used non-smooth (and, possibly, non-convex) regularizers, such as the weighted sum of the $\ell_0$-norm, $\ell_1$-norm, and elastic net regularization. Finally, we illustrate both the theoretical findings and the effectiveness of the approach through simulations. 
\end{abstract}

\begin{IEEEkeywords}
Distributed optimization, multitask graph, personalized learning, non-smooth regularization, proximal operator, gradient noise, mean-square-error stability.
\end{IEEEkeywords}
\vspace{-0.4 cm}
\section{Introduction}
The rapid proliferation of smart devices, including mobile phones, wearable technology, and autonomous vehicles, has led to a significant increase in the amount of private data generated continuously from distributed sources. Due to its sensitive nature and sheer volume, this data presents both opportunities and challenges. On one hand, the analysis of this data can drive innovation in healthcare industries, smart cities, and so on. On the other hand, it raises significant concerns, such as privacy issues related to the  collection and analysis of personal data, which could lead to misuse or unauthorized access. Conventional learning approaches often struggle with these issues, as they may not fully address privacy concerns. This requires new strategies that prioritize data protection while also leveraging the power of data for innovation.

In view of these challenges, \emph{federated} and \emph{decentralized} learning  have gained in popularity in recent years. In federated learning, multiple devices (also referred to as \emph{clients},  \emph{nodes}, or \emph{agents}) collaboratively learn a shared model under the orchestration of a central server, which coordinates the process and aggregates the model updates from each client~\cite{konecny2016federated,li2020federated,kairouz2021advances}. By keeping the training data on local devices and exchanging only model updates, federated learning reduces the risk of sharing sensitive data. Decentralized learning is an alternative approach that enables collaborative learning without relying on a central server~\cite{Sayed2014adaptation,sayed2013diffusion,Nassif2020multitask,Nedic2009distributed,koloskova2019decentralized}. The prevailing decentralized setting typically assumes a communication network topology where each agent can directly communicate with its neighboring agents. In this case, model updates are exchanged among agents, without a central server. While both approaches offer privacy benefits by keeping data local, decentralized learning may provide additional privacy benefits by eliminating the central server requirement which is a point of vulnerability that could be targeted in federated learning.

A typical federated or decentralized approach seeks to learn a common model across all devices by leveraging data from a variety of devices. However, in many practical applications, the data generated by different devices is highly non-iid (i.e., non independent and identically distributed), making it sub-optimal to learn a single global model for all devices. To address the statistical heterogeneity challenge,  \emph{personalized learning} can be employed. This approach aims to learn a personalized model for each device based on its own data, while still benefiting from the data of other devices. Various personalization methods have been investigated in the literature~\cite{Nassif2020multitask,Plata-Chaves2017heterogeneous,tan2023towards,kulkarni2020survey}. These strategies include local fine tuning in federated learning, where each client receives a global model and improves it using their local data through several gradient descent steps~\cite{Finn2017Model}. Another approach is contextualization, which involves adapting the model to perform effectively in different contexts~\cite{Wang2017Federated}. Lastly, multitask learning allows for exploiting commonalities and differences across multiple related tasks by learning them jointly~\cite{Smith2017Federated,Nassif2020multitask,Plata-Chaves2017heterogeneous}. 

In this paper, we propose and investigate a novel decentralized multitask learning strategy that allows to jointly learn multiple tasks from streaming data. The approach employs regularization to capture and promote relationships between tasks. Regularization has been previously explored in the context of decentralized collaborative multitask learning. For example, the works~\cite{Nassif2020learning,Chen2014multitask,Koppel2017proximity,Cao2018distributed} employ graph Laplacian regularization to promote task smoothness across the graph. The  work~\cite{Nassif2018regularization} extends~\cite{Nassif2020learning} by employing graph spectral regularization to further improve smoothness promotion. While most works in the literature use smooth graph-based regularizers, only a few explore non-smooth regularizers to promote piecewise constant transitions  between tasks or to perform task clustering.  Examples of such works include~\cite{Nassif2016proximal}, where $\ell_1$-norm based co-regularizers are employed to enforce sparsity on the differences between vectors at neighboring agents, and~\cite{Hallac2015network}, where $\ell_2$-norm based co-regularizers are used to perform automatic clustering over the graph (i.e., grouping nodes that share the same task together). Another family of decentralized multitask learning approaches formulates constrained optimization problems to promote relationships between tasks. These formulations include learning under subspace constraints~\cite{nassif2020adaptation,dilorenzo2020distributed,Chen2014diffusion,korres2010distributed,Kekatos2012distributed,Plata-Chaves2015distributed,Mota2014distributed}, enforcing general equality constraints among neighboring agents~\cite{Nassif2017diffusion,Alghunaim2019distributed}, and potentially extending to non-neighboring agents~\cite{Hua2017penalty}. Application-wise, decentralized multitask learning has proven effective in modeling task relationships, thereby enabling collaborative learning and enhancing global performance across various applications.  These include distributed power system state estimation~\cite{korres2010distributed, Kekatos2012distributed}, distributed weather forecasting~\cite{Nassif2020learning}, distributed housing price prediction~\cite{Hallac2015network}, distributed network flow estimation~\cite{Nassif2017diffusion,Mota2014distributed}, distributed power spectrum estimation in cognitive radio applications~\cite{Plata-Chaves2015distributed,Nassif2016proximal}, among others.

In this work, we aim to develop efficient approaches for networked applications where neighboring devices or agents are expected to assign similar weights to a large subset of features, while a subset of specific  (unknown in advance) features may require different weights to capture localized variations. In distributed healthcare applications, for example, agents (e.g., hospitals or wearable devices) tend to collect the same feature types and number (age, gender, blood pressure, glucose levels, cholesterol levels, etc.). While the number and type of features are the same for all agents, their importance in classification (e.g., predicting the risk of cancer or heart disease) may vary due to regional or behavior differences. For instance, in one region, cholesterol levels might be more significant due to dietary habits, while in another (neighboring) region, blood pressure may carry more weight because of a higher prevalence of hypertension-related problems. However, features such as age, gender, or smoking status might have similar weight across agents if they correlate strongly with the targeted health condition. As we will see, through multitask learning, it is possible to improve overall network performance by exploiting similarities in shared feature weights, while allowing  local adaptation at the level of individual feature weights. 

Given that the weight variability pattern across agents is unknown beforehand, formulating a constrained optimization problem as in~\cite{Plata-Chaves2015distributed,Mota2014distributed,korres2010distributed,Kekatos2012distributed,nassif2020adaptation,dilorenzo2020distributed} to enforce consensus on the shared components of the parameter vectors at neighboring agents is not feasible. Instead, we aim for a solution that promotes sparsity of the differences between parameter vectors at neighboring agents, without prior knowledge of the locations of the different components or weights. We formulate a decentralized multitask optimization problem in which each agent seeks to minimize an individual cost function while incorporating  co-regularization terms to promote collaborative learning with its neighbors. Specifically, each agent optimizes its parameters by minimizing the expected value of a loss function that is defined in terms of its own data. To promote cooperation, non-smooth 
regularization terms are added to the cost function, encouraging sparsity in the parameter differences between neighboring agents. This ensures that models of neighboring agents remain similar or differ only sparsely, enabling collaborative learning while allowing for local variations. We use constant step-sizes $\mu$, instead of diminishing ones, to allow the system to continuously learn and adapt to new data in streaming settings. To handle the non-smooth co-regularization terms, we employ proximal-based methods rather than sub-gradient methods. In deterministic settings, proximal methods are known to be more stable and to converge faster than their sub-gradient counterparts~\cite{combettes2005signal,Wee2013proximal,bertsekas2011incremental}. Under general convexity conditions on the individual costs and co-regularizers, as well as general conditions on the gradient approximations (which are necessary in online settings where agents rely on data-based gradient approximations rather than true gradients), we show that for sufficiently small step-size~$\mu$, the proposed decentralized proximal-based approach converges in the mean-square-error sense within $O(\mu)$ of the solution to the global regularized multitask optimization problem. To improve the computational efficiency of the proposed approach, which requires computing the proximal operator of a weighted sum of sparsity-based norms (such as $\ell_1$-norm, $\ell_0$-norm \footnote{Note that the $\ell_0$-norm is not a true norm, as it does not satisfy the homogeneity property. However, this terminology is widely used in the literature, and we will adopt it accordingly.}, etc.) at each iteration $i$, by each agent~$k$, we provide closed-form expressions for evaluating the proximal operators of the weighted sum of $\ell_1$-norms, $\ell_0$-norms, and elastic net regularization. 
 Simulations are finally conducted to illustrate the theoretical findings and  the effectiveness of the proposed approach.

\noindent\textbf{Notation.} All vectors are column vectors, with random quantities in boldface. Matrices are in uppercase, while vectors and scalars are in lowercase. The operators $\left( \cdot \right)^{\top}$ and $\left( \cdot \right)^{-1}$ denote matrix transpose and inverse, respectively. The operator $\text{col}\{\cdot\}$ stacks column vectors, and $\diag\{\cdot\}$ forms block-diagonal matrices. The $M \times M$ identity matrix is $ I_M$, and  $\preceq$ denotes an element-wise inequality.
\vspace{-0.4 cm}
\section{Problem formulation and decentralized proximal-based approach}
\label{sec: Section 1}
\subsection{Problem formulation}
\label{subsec: problem formulation}
We consider a connected network consisting of $K$ agents, where each agent $k$ is associated with an $M\times 1$ parameter vector $w_k\in\mathbb{R}^M$ and a risk function $J_k(w_k): \mathbb{R}^M \rightarrow \mathbb{R}$, which is assumed to be strongly convex. In most learning and adaptation problems, the risk function $J_k(\cdot)$ is expressed as the expectation of some loss function $ Q_k(\cdot)$ and is written as $J_k(w_k) = \expec Q_k(w_k; \bx_k)$, where $\bx_k$ denotes the random data collected by agent $k$, and the expectation is computed with respect to the data distribution. We denote the unique minimizer of $J_k(w_k)$ by $w_k^o$, so we can write:
\vspace{-0.1 cm}
\begin{equation}
\vspace{-0.2 cm}
\label{eq: local minimizers}
w^o_k\triangleq \arg\min_{w_k}J_k(w_k),
\vspace{-0.2 cm}
\end{equation}
and, consequently,
\vspace{-0.3 cm}
\begin{equation}
\vspace{-0.2 cm}
\cw^o =\arg\min_{\ccw}\sum_{k=1}^{K} J_k(w_k) ,
\end{equation}
where $\cw\triangleq\col \{w_1, ..., w_K\}$ and $\cw^o\triangleq\col \{w_1^o, ..., w_K^o\}$ denote the collection of individual parameter vectors and minimizers, respectively. Recall that the objective in the current work is to derive and study a decentralized strategy that enables collaborative learning in settings where the models or parameter vectors at neighboring agents are expected to differ only sparsely. To achieve this, we propose to incorporate co-regularization terms in order to promote similarity or sparsity in the parameter differences between neighboring agents. That is, we propose to solve the following regularized problem:
\vspace{-0.1 cm}
\begin{equation}
\vspace{-0.1 cm}
\label{eq: global}
\cw_{\eta}^o\triangleq  \arg\min_{\ccw} \left\{ J^{\text{glob}} (\cw) \triangleq \sum_{k=1}^{K} J_k(w_k) + \frac{\eta}{2} R(\cw)\right\},
\vspace{-0.1 cm}
\end{equation} 
where $R(\cw)$ is a regularization term that is assumed to be convex\footnote{Although the convexity condition is classically required for the analysis, we explain in the next section how the proximal operator of a weighted sum of $\ell_0$-norms (which is non-convex) is still worth considering with no computational efforts as its proximal operator can be computed in closed form. Additionally, we compare in the simulation section its performance against convex regularizers, such as those employing $\ell_1$-norms and elastic net regularization, to highlight the advantages of non-convex co-regularization in certain settings.}, but non-differentiable, and is given by:
\vspace{-0.2 cm}
\begin{equation} 
\vspace{-0.1 cm}
R(\cw) \triangleq \sum_{k=1}^{K} \sum_{\ell \in \cN_k} \rho_{k\ell}f_{k\ell}(w_k,w_\ell).
\label{eq: reg}
\vspace{-0.1 cm}
\end{equation} 
where $\cN_k$ denotes the set of neighbors of agent $k$, excluding $k$ itself, $\rho_{k\ell} >0$ is the weight associated with the link $(k,\ell)$ connecting neghboring nodes $k$ and $\ell$, and $f_{k\ell}(w_k,w_{\ell})$ is a non-differentiable regularization function associated with the link  $(k,\ell)$. The weight $\rho_{k\ell}$ allows for local adjustment of the regularization strength, enabling more flexibility in how strongly the parameters $w_k$ and $w_{\ell}$   of neighboring agents are encouraged to be similar or differ sparsely. The regularization strength $\eta>0$ controls the level of cooperation between agents. By setting  $\eta=0$, we obtain the non-cooperative solution in which each agent seeks to minimize its own cost $J_k(w_k)$ without cooperating with neighbors, namely,  $\cw_{\eta}^o=\cw^o$. When $\eta$ approaches infinity, i.e., $\eta \to \infty$, all agents shift their focus towards reaching a global consensus by estimating a common parameter vector $w^o$ defined as the minimizer of the aggregate sum of individual costs:
\vspace{-0.2 cm}
\begin{equation}
\vspace{-0.1 cm}
w^o=\arg\min_w\sum_{k=1}^KJ_k(w).
\vspace{-0.1 cm}
\end{equation} 
The regularization functions $f_{k\ell}(w_k,w_{\ell})$ and $f_{\ell k}(w_{\ell},w_k)$, associated with the links $(k,\ell)$ and  $(\ell,k)$, respectively, are assumed to be symmetric, meaning, $f_{k\ell}(w_k,w_{\ell})=f_{\ell k}(w_\ell,w_k)$. This symmetry ensures that the regularization between neighboring agents remains the same, regardless of the direction of the interaction. Consequently, by summing over all agents in~\eqref{eq: reg}, each term $f_{k\ell}(w_k,w_{\ell})$ can be viewed as weighted by $\frac{\rho_{k\ell} + \rho_{\ell k} }{2}$,  allowing us to write:
\vspace{-0.2 cm}
 \begin{equation} 
 R(\cw) = \sum_{k=1}^{K} \sum_{\ell \in \cN_k}^{} p_{k\ell}f_{k\ell}(w_k,w_\ell),
 \vspace{-0.2 cm}
\end{equation}  
where the factors $\{p_{k\ell}\} $ are symmetric, i.e., $ p_{k\ell} = p_{\ell k}$, and are given by:
\vspace{-0.1 cm}
\begin{equation}
 p_{k\ell}  \triangleq \frac{\rho_{k\ell} + \rho_{\ell k} }{2}.
 \vspace{-0.3 cm}
\end{equation}
\subsection{Decentralized stochastic proximal gradient approach}
\label{subsec: Decentralized stochastic proximal gradient approach}
In decentralized implementations, agent $k$ seeks to estimate the $k$-th sub-vector,  $w_{k,\eta}^o$, of $ \cw_{\eta}^o=\col\left\{w_{1,\eta}^o, ... ,w_{K,\eta}^o\right\}$ in~\eqref{eq: global} by performing local computations and by exchanging updates with its immediate neighbors. On the other hand, in stochastic settings, the distribution of the data $\bx_k$ in $J_k(w_k) = \expec Q_k(w_k; \bx_k)$ is generally unknown. As a result, the true risk and its gradient are unknown. In this case, approximate gradient vectors need to be employed. A common construction in the  stochastic approximation theory is to use the instantaneous approximation based on the available data~\cite{Sayed2014adaptation}:
\vspace{-0.1 cm}
\begin{equation}
\label{eq: approx}
\widehat {\nabla_{w_k} J_k}( w_k) = \nabla_{w_k} Q_k(w_k; \bx_{k,i}),
\vspace{-0.1 cm}
\end{equation}
where $\bx_{k,i}$ represents the data observed by agent $k$ at iteration $i$. The difference between the true gradient and its approximation, known as the \emph{gradient noise}, can be expressed as~\cite{Sayed2014adaptation}:
\vspace{-0.2 cm}
\begin{equation}
\bs_{k,i}(w) \triangleq \nabla_{w_k} J_k( w) - \widehat {\nabla_{w_k} J_k}( w). \label{eq: noise}
\vspace{-0.1 cm}
\end{equation}
In general, it is crucial to ensure that the randomness introduced by this gradient noise does not destabilize the learning process. 

In the following, we propose a stochastic-based approach to solve problem~\eqref{eq: global} in a decentralized manner. To handle the non-smooth regularization, we call upon the forward-backward splitting approach \cite{Combettes2011proximity,Parikh2014proximal}, which is a powerful iterative technique for solving optimization problems of the form:
\vspace{-0.1 cm}
\begin{equation}
\label{eq: prob}
\underset{x \in \mathbb{R}^M}{\min}  f(x) + g(x),
\vspace{-0.1 cm}
\end{equation}
with $f(\cdot)$ is a real-valued, differentiable convex function with a gradient that is $\beta$-Lipschitz continuous, and $g(\cdot)$ is a real-valued, non-differentiable convex function. The proximal gradient method, also known as the forward-backward splitting approach, for solving~\eqref{eq: prob} is given by~\cite{Combettes2011proximity,Parikh2014proximal}: 
\vspace{-0.1 cm}
\begin{equation}
\label{eq: iteration}
x_{i+1}= \prox_{\gamma g}\left(x_i - \gamma \nabla_x f(x_i) \right),
\vspace{-0.1 cm}
\end{equation}
where $\gamma$ is a constant step-size chosen such that $\gamma \in \left(0,2\beta^{-1}\right)$ to ensure convergence to the minimizer of \eqref{eq: prob}, and $\prox_{{\gamma} g}(v)$ is the \emph{proximal} operator of $\gamma g(x)$ at a given point $ v \in \mathbb{R}^M$ and is defined as:
\vspace{-0.1 cm}
\begin {equation}
\prox_{{\gamma} g}(v) =\arg\min_{x} g(x) + \frac{1}{2 \gamma} \Vert x - v \Vert ^2 .    \label{eq: prox}
\vspace{-0.1 cm}
\end{equation}
Note that it is also possible to apply the approach in~\eqref{eq: iteration} to scenarios where the function $g(\cdot)$ is non-convex. However, in this  case, the solution to  the optimization problem in~\eqref{eq: prox} is no longer single-valued~\cite{attouch2013convergence}.
The gradient-descent step in~\eqref{eq: iteration} is the \emph{forward} step (explicit step), and the proximal step is the \emph{backward} step (implicit step). By considering the regularized problem~\eqref{eq: global}, it can be observed that aggregate sum of individual costs corresponds to a smooth function, and that a stochastic gradient step on this smooth function would take the following form:
\vspace{-0.2 cm}
\begin{equation}
\bpsi_i=\col\left\{\bw_{k,i-1}-\mu\widehat{\nabla_{w_k} J_k}( \bw_{k,i-1}) \right\}_{k=1}^K,
\vspace{-0.1 cm}
\end{equation}
where $\bw_{k,i-1}$ represents the current weight vector at agent $k$, $\mu$ is a small positive step-size parameter, and  $\widehat{\nabla_{w_k} J_k}( \bw_{k,i-1})$ is an instantaneous approximation of the gradient vector at agent $k$. The non-smooth function in~\eqref{eq: global} corresponds to~$\eta R(\cw)$, so that a forward-backward splitting-based approach for solving~\eqref{eq: global} would take the following form:
\vspace{-0.1 cm}
\begin{equation}
\label{eq: centralized proximal approach}
\bcw_i=\prox_{\mu\eta R}\left(\col\left\{\bw_{k,i-1}-\mu\widehat{\nabla_{w_k} J_k}( \bw_{k,i-1}) \right\}_{k=1}^K\right),
\vspace{-0.1 cm}
\end{equation}
where $\bcw_i=\col\{\bw_{1,i},\ldots,\bw_{K,i}\}$ is the estimate of  $\cw^o_{\eta}$  at iteration $i$. While the gradient step in~\eqref{eq: centralized proximal approach} can be implemented in a fully decentralized manner, the proximal step requires the computation of the proximal operator of $\mu\eta R(\cw)$, which cannot be performed in a decentralized manner. In the following, we propose an alternative decentralized solution that allows each agent to handle its own regularization independently, thereby enabling the computation of  the proximal operator locally at each agent. Then, in Sec.~\ref{sec: Section 2}, we study its mean-square-error stability  w.r.t. the solution $\cw^o_{\eta}$ of the global regularized cost in~\eqref{eq: global}.

In this work, we investigate the following decentralized stochastic proximal gradient approach for solving~\eqref{eq: global}:
\vspace{-0.1 cm}
\begin{subequations} 
 \label{eq: algor}
 \begin{empheq}[left={\empheqlbrace\,}]{align}
    \bpsi_{k,i} &= \bw_{k,i-1} - \mu \widehat {\nabla_{w_k} J_k}( \bw_{k,i-1}) \label{eq: algor step 1}  \\                          
    \bw_{k,i}   & = \prox_{\mu\eta \boldsymbol{g}_{k,i}} ( \bpsi_{k,i} )       \label{eq: algor step 2}                                                                 
 \end{empheq}
\end{subequations}
where $\boldsymbol{g}_{k,i}:\mathbb{R}^M\rightarrow \mathbb{R}$ is a non-smooth regularization function given by: 
\vspace{-0.4 cm}
\begin{equation}
\label{eq: gpsi}
\boldsymbol{g}_{k,i}(w_k)= \sum_{ \ell \in \cN_k} p_{k\ell} f_{k\ell}(w_k, \bpsi_{\ell,i}).
\vspace{-0.1 cm}
\end{equation}
The first step~\eqref{eq: algor step 1}, known as the \emph{self-learning} step~\cite{Nassif2020multitask}, corresponds to the stochastic gradient step on the individual cost $ J_k(w_k)$, where $ \mu>0$ is a  small step-size parameter. The result of step~\eqref{eq: algor step 1} is $ \bpsi_{k,i}$, the intermediate estimate of $ w^o_{k,\eta} $ at iteration $i$, agent $k$. In the subsequent step~\eqref{eq: algor step 2}, known as the \emph{social-learning} step~\cite{Nassif2020multitask}, agent~$k$ collects the intermediate estimates $\{\bpsi_{\ell,i}\}$ from its neighbors $\ell\in\cN_k$ and implements the proximal operator of the function $\mu \eta\boldsymbol{g}_{k,i}(w_k)$ defined in~\eqref{eq: gpsi}. This step enables collaborative learning by leveraging information exchange among neighboring agents. 
Note that the proximal operator of the function $\mu \eta \boldsymbol{g}_{k,i}(w_k)$  must be evaluated at every iteration~$i$, for all agents $k$ in the network. Therefore, having a closed-form expression for its computation is important to achieve higher computational efficiency. In Sec.~\ref{sec: Section 3}, we provide closed-form expressions for the proximal operator of a weighted sum of commonly used sparsity-based norms.

\subsection{Related work and contributions}
\label{subsec: Related work and contributions}
The current work is not the first to investigate non-smooth regularization in the context of decentralized multitask learning. For instance, the work~\cite{Nassif2016proximal} employs $\ell_1$-norm based co-regularizers to promote sparsity in the differences between vectors of neighboring agents and employs a  decentralized stochastic proximal gradient method similar to that in~\eqref{eq: algor}. In the same context, the work~\cite{jin2020online} investigates the  $\ell_{\infty,1}$-norm to encourage joint sparsity through co-regularization. The work~\cite{Hallac2015network} proposes a primal-dual based approach to solve regularized problems of the form~\eqref{eq: global} where $\ell_2$-norm based co-regularizers are used to encourage automatic clustering over the graph (i.e., grouping agents that share the same task together). Unlike the current work, which focuses on stochastic optimization, the work~\cite{Hallac2015network} addresses deterministic optimization. Compared to the previous works~\cite{Nassif2016proximal,jin2020online}, our contributions can be summarized as follows:
\begin{itemize}
\item First, unlike these prior works which focus on mean-square-error costs of the form $J_k(w_k)=\expec(\bd_k(i)-\bu_{k,i}^\top w_k)^2$, our work generalizes the non-smooth regularization framework over multitask networks and applies it to a broader class of individual costs, such as logistic regression costs -- see Example~\ref{ex: lr} further ahead. This generalization enables the solution of a wide range of classification and machine learning problems.
\item Second, our analysis focuses on evaluating how effectively the multitask strategy~\eqref{eq: algor} approaches the optimal solution $\cw_{\eta}^o$ of the regularized cost~\eqref{eq: global}. In contrast, prior works~\cite{Nassif2016proximal,jin2020online} focus on showing how collaborative learning affects the mean-square-error stability (studied w.r.t. the local minimizers $w^o_k$ in~\eqref{eq: local minimizers}). While such analyses are valuable, they do not address how effectively the decentralized approach solves the global problem~\eqref{eq: global}.
\item Finally, with the aim of improving the generality and broader applicability of collaborative learning in non-smooth regularization contexts, and following ideas developed in~\cite{Nassif2016proximal} for deriving closed-form expression for the proximal operator of a weighted sum of $\ell_1$-norms, i.e., when $\boldsymbol{g}_{k,i}(w_k)= \sum_{ \ell \in \cN_k} p_{k\ell} \|w_k-\bpsi_{\ell,i}\|_1$, we derive closed-form expressions for computing the proximal operator of the function $\boldsymbol{g}_{k,i}(w_k)$ in two specific cases: \emph{i)} when $\boldsymbol{g}_{k,i}(w_k)$ is a weighted sum of $\ell_0$-norms, i.e., $\boldsymbol{g}_{k,i}(w_k)= \sum_{ \ell \in \cN_k} p_{k\ell} \|w_k-\bpsi_{\ell,i}\|_0$, and \emph{ii)} when  $\boldsymbol{g}_{k,i}(w_k)$ is a weighted sum of elastic net regularizations, i.e., $\boldsymbol{g}_{k,i}(w_k)= \sum_{ \ell \in \cN_k} p_{k\ell}\left( \|w_k-\bpsi_{\ell,i}\|_1+\frac{\beta}{2} \|w_k-\bpsi_{\ell,i}\|_2^2\right)$. 
\end{itemize}


\section{Network mean-square-error stability}
\label{sec: Section 2}

In this section, we study how effectively the multitask strategy \eqref{eq: algor} approaches the optimal solution $\cw_{\eta}^o$ of the regularized cost~\eqref{eq: global} by analyzing the mean-square-error $\expec \Vert w_{k,\eta}^o - \bw_{k,i} \Vert^2$. Before proceeding, we introduce the following assumptions on the individual costs $J_k(w_k)$, the gradient noise processes $\bs_{k,i}(w)$ defined in~\eqref{eq: noise}, and the non-smooth co-regularization functions $\bg_{k,i}(w_k)$ defined in~\eqref{eq: gpsi}. 

\begin{assumption} (Conditions on the individual costs). \label{ass: boundedhess}
For each agent $k$, the individual cost $J_k(w_k)$ is assumed to be twice-differentiable and strongly convex. Moreover, the Hessian matrix function $H_k(w_k)= \nabla_{w_k} ^2J_k(w_k)$ is assumed to satisfy the following condition:
\vspace{-0.2 cm}
\begin{equation}
0 < \nu_k I_M \leq H_k(w_k) \leq \delta_k I_M,
\vspace{-0.2 cm}
\end{equation}
for some $0<\nu_k\leq \delta_k$.\hfill\qed
\end{assumption}
\begin{assumption} (Conditions on the gradient noise).
\label{ass: gradient noise}
For each agent $k$, the gradient noise process defined in~\eqref{eq: noise} is assumed to satisfy:
\vspace{-0.4 cm}
\begin{align}
\expec\left[\bs_{k,i}(\bw) \mid {\bcF}_{i-1}\right] &= 0,\\
\expec\left[\| \bs_{k,i}(\bw)\|^2 \mid {\bcF}_{i-1}\right] &\leq \beta_{s,k}^2 \| \bw \|^2 + \sigma_{s,k}^2,
\vspace{-0.2 cm}
\end{align}
for any $\bw\in\bcF_{i-1}$, for some $\beta_{s,k}^2 \geq 0$ and  $\sigma_{s,k}^2 \geq 0$, and where ${\bcF}_{i-1}$ denotes the filtration generated by the random processes $\{\bw_{\ell,j}\} $ for all $ \ell=1,...,K$ and $ j \leq i-1$.\hfill\qed
\end{assumption}
 \begin{assumption}(Conditions on the non-smooth co-regularizers).
 \label{ass: ass3}
With each agent $k$, we associate a non-smooth convex function given by:
\vspace{-0.1 cm}
 \begin{equation}
 \label{eq: definition of the function rk}
 r_k\left(w_k,\{w_{\ell}\}_{\ell\in\cN_k}\right)\triangleq\sum_{\ell\in\cN_k}p_{k\ell}f_{k\ell}(w_k,w_{\ell}),
 \vspace{-0.1 cm}
 \end{equation}
 where $\cN_k$ refers to the set of neighbors of agent $k$, excluding $k$ itself. 
 We assume that the subdifferential set of~\eqref{eq: definition of the function rk} denoted by $\partial_{w_k} r_k\left(w_k,\{w_{\ell}\}_{\ell\in\cN_k}\right)$ is uniformly bounded, namely,
   \begin{equation}
 \label{boundedsub}
\left\| v \right\| \leq e_k, \qquad \forall v \in   \partial_{w_k} r_k\left(w_k,\{w_{\ell}\}_{\ell\in\cN_k}\right),  w_k,w_{\ell}\in\mathbb{R}^M
 \end{equation}
where $e_k$ is some non-negative constant. 
\hfill\qed
 \end{assumption}
 \noindent
As explained in~\cite{Sayed2014adaptation}, the conditions in Assumptions~\ref{ass: boundedhess} and~\ref{ass: gradient noise} are satisfied by many objective functions of interest in learning and adaptation such as quadratic and logistic risks. On the other hand, as we show in Appendix~\ref{Appendix: A}, Assumption~\ref{ass: ass3} is satisfied by the $\ell_1$-norm co-regularization, i.e., when the functions $f_{k\ell}(w_k,w_{\ell})$ in~\eqref{eq: definition of the function rk} are of the form $\|w_k-w_{\ell}\|_1$. 
Before presenting the theorem that quantifies how closely the stochastic algorithm \eqref{eq: algor} approaches $ \cw_{\eta}^o$ in~\eqref{eq: global}, we first define the mean-square perturbation vector at time $i$ relative to $ \cw_{\eta}^o$: 
\vspace{-0.1 cm}
  \begin{equation}
 \text{ MSP}_i \triangleq \col \left \{ \expec \vert \vert  \bw_{k,i} - w_{k,\eta}^o   \vert \vert^2 \right \}_{k=1}^K.\label{eq: definition of MSP}
 \vspace{-0.1 cm}
  \end{equation}
  The $k$-th entry of $\text{MSP}_i $ characterizes how far away the estimate $\bw_{k,i}$ at agent $k$ and time $i$ is from $w_{k,\eta}^o$, the $k$-th subvector of $ \cw_{\eta}^o$. 
  \begin{theorem} \label{theorem2}(Network mean-square-error stability). Under Assumptions \ref{ass: boundedhess}, \ref{ass: gradient noise} and \ref{ass: ass3}, and for a regularization strength~$\eta$ of the form $\eta= \kappa \mu^{\alpha}$, where $\alpha \geq \frac{1}{2}$ and $\kappa$ is some positive constant independent of $\mu$, the mean-square perturbation can be recursively bounded as: 
  \vspace{-0.1 cm}
  \begin{equation}
  \label{eq: recursion of the MSP}
  \emph{ \text{MSP}}_i \preceq A \emph{\text{ MSP}}_{i-1} + \mu (c + \mu d), 
  \vspace{-0.1 cm}
  \end{equation}
  where: 
  \vspace{-0.4 cm}
\begin{align}
A &\triangleq {\emph{\diag }}\left \{ 1 - \frac{\mu \nu_k}{2} + \frac{ 2 \mu^2  \beta_{s,k}^2}{1 -  \frac{\mu \nu_k}{2}} \right \}_{k=1}^{K},\label{eq: A definition}\\
c& \triangleq  \eta^2{\emph{\col }} \left \{ \frac{8 e_k^2}{\nu_k} \right \}_{k=1}^{K}, \label{eq: c}\\
d &\triangleq{\emph{\col }} \left \{ \frac{ 2  \beta_{s,k}^2}{1 -  \frac{\mu \nu_k}{2}}  \vert \vert w_{k,\eta}^o \vert \vert^2 + \frac{ \sigma_{s,k} ^2 }{1 -  \frac{\mu \nu_k}{2}}  \right \}_{k=1}^{K}.  \label{eq: d}
\vspace{-0.4 cm}
\end{align}  
A sufficient condition for ensuring the stability of recursion~\eqref{eq: recursion of the MSP} is: 
\vspace{-0.4 cm}
  \begin{equation}
  0 < \mu < \underset{ 1 \leq k \leq K}{\min}\left \{ \frac{\nu_k}{\delta_k^2},  \frac{\nu_k}{4 \beta_{s,k}^2 + \frac{\nu_k^2}{2}} \right \} .
  \vspace{-0.2 cm}
  \end{equation}
  \noindent
  It then follows that: 
  \vspace{-0.2 cm}
  \begin{equation}
  \label{eq: result}
 \left\| \limsup_{i \to \infty} \emph{\text{ MSP}}_i \right\|_{\infty}  = O(\mu) ,
 \vspace{-0.1 cm}
  \end{equation}
  where $\|x\|_{\infty}$ is the $\ell_{\infty}$-norm of the vector $x$ defined as $\|x\|_{\infty}=\max_{m}|x_m|$ with $x_m$ denoting the $m$-th entry of~$x$.
  \end{theorem}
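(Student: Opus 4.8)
The plan is to derive a scalar, per-agent recursion for each entry $\expec\|\bw_{k,i}-w_{k,\eta}^o\|^2$ of $\text{MSP}_i$ and then iterate it. The starting observation—and the device that decouples the agents—is that the social-learning (proximal) step only perturbs the intermediate iterate by a bounded amount. Indeed, the optimality condition of the proximal operator in~\eqref{eq: algor step 2} gives $\bpsi_{k,i}-\bw_{k,i}\in\mu\eta\,\partial\bg_{k,i}(\bw_{k,i})$, and since $\bg_{k,i}(\cdot)$ has exactly the form of $r_k(\cdot,\{\bpsi_{\ell,i}\})$ from~\eqref{eq: definition of the function rk}, Assumption~\ref{ass: ass3} yields $\|\bpsi_{k,i}-\bw_{k,i}\|\le\mu\eta e_k$. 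Thus the proximal step contributes a controllable $O(\mu\eta)$ correction rather than coupling agent $k$'s error to those of its neighbors, which is precisely why the matrix $A$ turns out to be diagonal.

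Next I would analyze the forward step. Writing the gradient noise via~\eqref{eq: noise} and applying the mean-value theorem to the twice-differentiable cost gives $\bpsi_{k,i}-w_{k,\eta}^o=(I_M-\mu\overline{H}_{k,i-1})(\bw_{k,i-1}-w_{k,\eta}^o)-\mu\nabla_{w_k}J_k(w_{k,\eta}^o)+\mu\bs_{k,i}$, where the averaged Hessian satisfies $\nu_k I_M\le\overline{H}_{k,i-1}\le\delta_k I_M$ by Assumption~\ref{ass: boundedhess}, so that $\|I_M-\mu\overline{H}_{k,i-1}\|\le 1-\mu\nu_k$ under the stated step-size bound. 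A crucial point is that $w_{k,\eta}^o$ does \emph{not} annihilate $\nabla_{w_k}J_k$; instead the optimality condition $0\in\nabla_{w_k}J_k(w_{k,\eta}^o)+\tfrac{\eta}{2}\,\partial_{w_k}R(\cw_\eta^o)$ together with the (symmetric) subgradient bounds of Assumption~\ref{ass: ass3} yields $\|\nabla_{w_k}J_k(w_{k,\eta}^o)\|=O(\eta)$, which is the source of the $\eta^2$ factor in the bias vector $c$.

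I would then take conditional expectations given $\bcF_{i-1}$. Because $\bs_{k,i}$ is zero-mean and the remaining terms of $\bpsi_{k,i}-w_{k,\eta}^o$ are $\bcF_{i-1}$-measurable, the cross term drops and the noise enters only through $\mu^2\expec[\|\bs_{k,i}\|^2\mid\bcF_{i-1}]\le\mu^2(\beta_{s,k}^2\|\bw_{k,i-1}\|^2+\sigma_{s,k}^2)$; substituting $\|\bw_{k,i-1}\|^2\le 2\|\bw_{k,i-1}-w_{k,\eta}^o\|^2+2\|w_{k,\eta}^o\|^2$ then produces the $2\beta_{s,k}^2$ terms in $A$ and $d$. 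Combining the contraction factor $(1-\mu\nu_k)^2$, the $O(\eta)$ gradient bias, the $\mu\eta e_k$ proximal correction, and the noise variance via two nested Young/Jensen splits of the form $\|a+b\|^2\le\frac{1}{1-t}\|a\|^2+\frac1t\|b\|^2$, with parameters tuned (the outer one $\approx\mu\nu_k/2$), delivers the coefficient $1-\frac{\mu\nu_k}{2}$ together with the $\frac{1}{1-\mu\nu_k/2}$-weighted remainder, i.e. recursion~\eqref{eq: recursion of the MSP}.

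Finally, since $A$ is diagonal, the vector recursion splits into $K$ scalar inequalities $x_{k,i}\le A_{kk}x_{k,i-1}+\mu(c_k+\mu d_k)$; choosing $\mu$ below the stated threshold makes $A_{kk}<1$ with $1-A_{kk}=\Theta(\mu)$, so iterating gives $\limsup_i x_{k,i}\le\frac{\mu(c_k+\mu d_k)}{1-A_{kk}}$. With $\eta=\kappa\mu^\alpha$ and $\alpha\ge\frac12$ the numerator is $O(\mu^2)$ (the $\mu c_k$ contribution is $O(\mu^{1+2\alpha})=O(\mu^2)$), whence each $x_{k,i}=O(\mu)$ and $\|\limsup_i\text{MSP}_i\|_{\infty}=O(\mu)$. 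I expect the main obstacle to lie in the second step: extracting a clean per-agent estimate $\|\nabla_{w_k}J_k(w_{k,\eta}^o)\|=O(\eta)$ from the global, coupled optimality condition for $\cw_\eta^o$, since $\partial_{w_k}R$ mixes the ``outgoing'' subgradient of $r_k$ (directly controlled by Assumption~\ref{ass: ass3}) with the ``incoming'' contributions from the neighbors $j\in\cN_k$; reconciling the two requires invoking the symmetries $p_{k\ell}=p_{\ell k}$ and $f_{k\ell}=f_{\ell k}$.
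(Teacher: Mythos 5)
Your proposal is correct and follows essentially the same route as the paper's proof: the same error recursion via the proximal optimality condition and the mean-value theorem, the same $O(\eta)$ bound on the bias $b_k=\nabla_{w_k}J_k(w_{k,\eta}^o)$ extracted from the global optimality condition using the symmetries $p_{k\ell}=p_{\ell k}$ and $f_{k\ell}=f_{\ell k}$, the same Jensen/Young split with outer parameter $t=\mu\nu_k/2$, and the same iteration of the resulting diagonal recursion. The only cosmetic difference is that you bound the proximal correction $\|\bpsi_{k,i}-\bw_{k,i}\|\le\mu\eta e_k$ as a separate perturbation of the forward step, whereas the paper folds the selected subgradient directly into a single error equation before splitting; the two computations are identical.
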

\begin{proof} 
By examining the optimality condition of~\eqref{eq: prox}, which states that the $0$ vector belongs to the subdifferential of $g(x)+\frac{1}{2\gamma}\|x-v\|^2$ at the minimizer $\prox_{\gamma g}(v)$~\cite{Parikh2014proximal}, and applying this condition to step~\eqref{eq: algor step 2} of our approach, we obtain: 
\vspace{-0.2 cm}
\begin{equation}
0 \in \partial_{w_k} \bg_{k,i} ( \bw_{k,i} ) + \frac{1}{\mu\eta}(\bw_{k,i} - \bpsi_{k,i}).
\vspace{-0.1 cm}
\end{equation}
From step~\eqref{eq: algor step 1}, we can write the following relation:
\vspace{-0.2 cm}
\begin{equation}
0 \in \partial_{w_k} \bg_{k,i} ( \bw_{k,i} ) + \frac{1}{\mu\eta}\left(\bw_{k,i} - \bw_{k,i-1} + \mu \widehat{\nabla_{w_k} J_k}( \bw_{k,i-1}) \right),
\vspace{-0.2 cm}
\end{equation}
which can be re-written as:
\vspace{-0.1 cm}
\begin{equation}
\frac{1}{\mu\eta}\left(\bw_{k,i-1} - \bw_{k,i} - \mu \widehat{\nabla_{w_k} J_k}( \bw_{k,i-1}) \right) \in \partial_{w_k} \bg_{k,i} ( \bw_{k,i} )  .
\vspace{-0.1 cm}
\end{equation}
We denote $\widehat  {\partial_{w_k} \bg_{k,i} }( \bw_{k,i} )$ as the particular subgradient selected by the proximal operator from $\partial_{w_k} \bg_{k,i}( \bw_{k,i} )$, and write:
\vspace{-0.2 cm}
\begin{equation}
\widehat{\partial_{w_k}\bg_{k,i}}( \bw_{k,i} ) = \frac{1}{\mu\eta}\left(\bw_{k,i-1} - \bw_{k,i} - \mu \widehat{\nabla_{w_k} J_k}( \bw_{k,i-1}) \right) ,
\vspace{-0.2 cm}
\end{equation}
from which we conclude that\footnote{From~\eqref{eq: subg}, we observe that the gradient $ \widehat{\nabla_{w_k} J_k}(\cdot) $ is evaluated at $ \bw_{k,i-1}$, whereas the subgradient $\partial_{w_k}\bg_{k,i} (\cdot)$ is computed at $\bw_{k,i}$. This is why the proximal gradient algorithm is often referred to as the ``forward-backward'' approach.}:
\vspace{-0.2 cm}
\begin{equation}
\label{eq: subg}
\bw_{k,i} = \bw_{k,i-1} - \mu \widehat{ \nabla_{w_k} J_k}( \bw_{k,i-1}) - \mu\eta \widehat{\partial_{w_k}\bg_{k,i}}( \bw_{k,i} ) .
\end{equation}

Now, let $\bwt_{k,i}\triangleq\bw_{k,i}-w^o_{k,\eta}$. By subtracting $w_{k,\eta}^o$  from both sides of~\eqref{eq: subg}, we obtain:
\vspace{-0.1 cm}
 \begin{align}
\bwt_{k,i}&= \bwt_{k,i-1} - \mu \widehat{ \nabla_{w_k} J_k}( \bw_{k,i-1}) - \mu\eta \widehat{\partial_{w_k}\bg_{k,i}}( \bw_{k,i} )\notag\\
&\overset{\eqref{eq: noise}}=\bwt_{k,i-1} - \mu  \nabla_{w_k} J_k( \bw_{k,i-1}) + \mu  \bs_{k,i}(\boldsymbol{w}_{k,i-1}) \notag\\
& \quad - \mu\eta \widehat{\partial_{w_k}\bg_{k,i}}( \bw_{k,i} ). \label{eq: relation1}
\end{align}
By applying the mean-value theorem to the twice differentiable function $J_k(\cdot)$~\cite{Polyak1987introduction}, we can write:  
\begin{equation}
\label{eq: mean-value theorem}
 \nabla_{w_k} J_k( \bw_{k,i-1})= \nabla_{w_k} J_k( w^o_{k,\eta})+ \bH_{k,i-1}\bwt_{k,i-1},
\end{equation}
\vspace{-0.2 cm}
where:
\begin{equation}
 \bH_{k,i-1} \triangleq \int_{0}^{1} \nabla_{w_k} ^2 J_k( w_{k,\eta}^o + t \bwt_{k,i-1})dt.
 \end{equation}
 By using~\eqref{eq: mean-value theorem} into~\eqref{eq: relation1}, we conclude that:
\vspace{-0.1 cm}
\begin{equation}
\label{eq: relation2}
\boxed{
\begin{aligned}
\bwt_{k,i} = & \left( I_{M} - \mu \bH_{k,i-1} \right) \bwt_{k,i-1} 
 - \mu \eta \widehat{\partial_{w_k} \bg_{k,i}}( \bw_{k,i} ) \\
& + \mu \bs_{k,i}(\bw_{k,i-1}) - \mu b_k
\end{aligned}
}
\end{equation}
where $b_k$ is an $M\times 1$ vector defined as:
\vspace{-0.1 cm}
 \begin{equation}
 b_k\triangleq\nabla_{w_k} J_k\left( w_{k,\eta}^o\right).\label{eq: bias vector}
 \end{equation}
 
Now, by taking the squared norm of both sides of~\eqref{eq: relation3} and then computing expectations, we obtain: 
\begin{align}
\expec \|  \bwt_{k,i}\|^2 
&= \expec \big\| \big( I_{M} - \mu \bH_{k,i-1} \big) \bwt_{k,i-1} 
+ \mu \bs_{k,i}(\bw_{k,i-1})  \notag \\
& \quad - \mu\eta \widehat{\partial_{w_k}\bg_{k,i}}( \bw_{k,i} ) 
- \mu b_k \big\|^2  \notag \\
&\overset{\text{(a)}}{\leq} \frac{1}{1 - t} 
\expec \big\|  \big( I_{M} - \mu \bH_{k,i-1} \big) \bwt_{k,i-1} \notag \\
& + \mu \bs_{k,i}(\bw_{k,i-1}) \big\|^2  
 + \frac{\mu^2}{t} \big\| 
\eta \widehat{\partial_{w_k} \bg_{k,i}}( \bw_{k,i} ) + b_k \big\|^2.  \label{eq: intermediate MSP}  
\end{align}
 where in step (a) we applied Jensen's inequality to the convex function $\|\cdot\|^2$ for any arbitrary positive number $t\in(0,1)$~\cite{Jensen1906fonctions}.

In the following, we upper bound the first expectation on the RHS of inequality~\eqref{eq: intermediate MSP}. By using Assumption~\ref{ass: gradient noise} and the sub-multiplicative property of norms, we can write: 
\vspace{-0.1 cm}
  \begin{align}
& \expec \left[\vert \vert    (I_{M} - \mu \bH_{k,i-1})\bwt_{k,i-1} +   \mu \bs_{k,i}(\bw_{k,i-1})     \vert\vert ^2 |\bcF_{i-1}\right] \notag\\
 &= \|(I_{M} - \mu \bH_{k,i-1})\bwt_{k,i-1}\|^2  \notag\\
 & \qquad +\mu^2\expec \left[\|\bs_{k,i}(\bw_{k,i-1}) \|^2|\bcF_{i-1} \right]\notag\\
 &\leq\|I_{M} - \mu \bH_{k,i-1}\|^2\|\bwt_{k,i-1}\|^2 \notag\\
 &\qquad +\mu^2\expec \left[\|\bs_{k,i}(\bw_{k,i-1}) \|^2|\bcF_{i-1} \right].\label{eq: first term in MSP}
 \end{align} 
 \vspace{-0.2 cm}
From Assumption~\ref{ass: boundedhess}, we can write: 
\begin{equation}
  (1-\mu\delta_k)I_{M}    \leq I_{M} - \mu \bH_{k,i-1} \leq    (1-\mu\nu_k )I_{M}.
\end{equation}
\vspace{-0.2 cm}
 Consequently, $\|I_{M} - \mu \bH_{k,i-1}\|^2$ can be upper bounded as:
 \begin{align}
\Vert I_{M} - \mu \bH_{k,i-1}\Vert ^2 &\overset{(\text{a})}= \left[ \rho( I_{M} - \mu \bH_{k,i-1}) \right] ^2\notag\\ 
&\leq \max \left \{ (1-\mu \nu_k)^2, (1 - \mu\delta_k)^2 \right \}\notag\\
&\leq 1 - 2 \mu \nu_k + \mu^2 \delta_k^2\notag\\
&\overset{(\text{b})}\leq 1 -  \mu \nu_k \notag\\
&\leq \left(1- \frac{\mu \nu_k }{2}\right)^2 ,\label{eq: upper bound on i-muh}
\end{align}
where in (a) we used the fact that $I_{M} - \mu \bH_{k,i-1}$ is  symmetric, so that its $2$-induced norm is equal to its spectral radius, and in (b) we assumed that $\mu$ is sufficiently small and satisfies:
\vspace{-0.1 cm}
\begin{equation}
\mu <\underset{ 1 \leq k \leq K}{\min}\left \{ \frac{\nu_k}{\delta_k^2} \right \}.
\end{equation} 
From Assumption~\ref{ass: gradient noise}, we can upper bound the gradient noise term in~\eqref{eq: first term in MSP} according to:
\vspace{-0.2 cm}
 \begin{align}
   \expec \left[\vert \vert \bs_{k,i}(\bw_{k,i-1}) \vert\vert ^2 \mid {\bcF}_{i-1} \right]  &\leq \beta_{s,k}^2 \|\bw_{k,i-1}\|^2  + \sigma_{s,k} ^2 \notag\\
  & = \beta_{s,k}^2 \lVert  \bw_{k,i-1} - w_{k,\eta}^o + w_{k,\eta}^o  \rVert ^2 \notag\\
  & \quad +  \sigma_{s,k} ^2\notag\\
  &  \leq 2 \beta_{s,k}^2 \lVert  \bwt_{k,i-1} \rVert ^2   \notag\\
  & \quad + 2 \beta_{s,k}^2 \lVert w_{k,\eta}^o  \rVert ^2 +   \sigma_{s,k} ^2 , \label{eq: expecnoise}
\end{align} 
where in the last step we applied Jensen's inequality to the convex function $\|\cdot\|^2$. By using inequalities~\eqref{eq: upper bound on i-muh} and~\eqref{eq: expecnoise} into~\eqref{eq: first term in MSP}, we obtain:
\vspace{-0.1 cm}
 \begin{align}
& \expec \left[\vert \vert    (I_{M} - \mu \bH_{k,i-1})\bwt_{k,i-1} +   \mu \bs_{k,i}(\bw_{k,i-1})     \vert\vert ^2 |\bcF_{i-1}\right]\notag\\
&\leq\left(\left(1- \frac{\mu \nu_k }{2}\right)^2+2\mu^2 \beta_{s,k}^2\right)\|\bwt_{k,i-1}\|^2 \notag\\
&\qquad + 2\mu^2 \beta_{s,k}^2 \lVert w_{k,\eta}^o  \rVert ^2 + \mu^2  \sigma_{s,k} ^2.\label{eq: first term in MSP}
 \end{align} 

Let us now upper bound the second term on the RHS of inequality~\eqref{eq: intermediate MSP}. By applying Jensen's inequality to the convex function $\|\cdot\|^2$, we can write:
\vspace{-0.2 cm}
\begin{align}
\left\|\eta \widehat{\partial_{w_k}\bg_{k,i}}( \bw_{k,i} )+b_k\right\|^2&\leq 2\eta^2\left\|\widehat{\partial_{w_k}\bg_{k,i}}( \bw_{k,i} )\right\|^2+2\left\|b_k\right\|^2\notag\\
&\overset{(\text{a})}\leq2\eta^2e_k^2+2\left\|b_k\right\|^2,\label{eq: relation5}
\vspace{-0.2 cm}
\end{align}
where step (a) follows from Assumption~\ref{ass: ass3} and from the fact that $\bg_{k,i}(w_k)$ in~\eqref{eq: gpsi} is equal to  $r_{k}(w_k,\{w_{\ell}\}_{\ell\in\cN_k})$ in~\eqref{eq: definition of the function rk}, with the variables $w_{\ell}$ replaced by the intermediate estimates $\bpsi_{\ell,i}$. To bound the norm of the vector $b_k$ defined in~\eqref{eq: bias vector}, we use the optimality  optimality conditions of~\eqref{eq: global}, and the fact that the individual costs $J_k(w_k)$ are strongly convex and the non-smooth regularizer $R(\cw)$ is convex, to write: 
\vspace{-0.1 cm}
\begin{equation}
0 \in  \col \left\{ \nabla_{w_k} J_k( w_{k,\eta}^o)\right\}_{k=1}^K + \frac{\eta}{2} \partial_{\ccw}  R(\cw_{\eta}^o),
\end{equation}
from which we conclude that:
\begin{equation}
-\col \left\{ b_k\right\}_{k=1}^K \in \frac{\eta}{2} \partial_{\ccw}  R(\cw_{\eta}^o).
\end{equation}
Since the regularization functions $f_{k\ell}(w_k,w_{\ell})$ and $f_{\ell k}(w_{\ell},w_k)$ are assumed to be symmetric, meaning, $f_{k\ell}(w_k,w_{\ell})=f_{\ell k}(w_\ell,w_k)$, and the factors $\{p_{k\ell}\} $ are symmetric, i.e., $ p_{k\ell} = p_{\ell k}$, we can write:
\begin{equation}
\partial_{w_k}  R(\cw)=2\partial_{w_k} r_k\left(w_k,\{w_{\ell}\}_{\ell\in\cN_k}\right),
\vspace{-0.1 cm}
\end{equation}
and, consequently, at the solution $\cw^o_{\eta}$, the sub-differential set of  $R(\cw)$ w.r.t. $w_k$ satisfies: 
\begin{equation}
\label{eq: relation3}
-b_k\in\eta\partial_{w_k} r_k\left(w^o_{k,\eta},\{w^o_{\ell,\eta}\}_{\ell\in\cN_k}\right).
\end{equation}
From~\eqref{eq: relation3} and Assumption~\ref{ass: ass3}, we can then conclude that:
\begin{equation}
\label{eq: relation4}
\|b_k\|^2\leq \eta^2e_k^2.
\end{equation}
By using the bound~\eqref{eq: relation4} into~\eqref{eq: relation5}, we can write: 
\begin{equation}
\vspace{-0.1 cm}
\left\|\eta \widehat{\partial_{w_k}\bg_{k,i}}( \bw_{k,i} )+b_k\right\|^2\leq 4\eta^2e_k^2.\label{eq: relation6}
\end{equation}

By using~\eqref{eq: first term in MSP} and~\eqref{eq: relation6} into~\eqref{eq: intermediate MSP}, and by selecting $ t = \frac{\mu \nu_k}{2}$ (which is guaranteed to be in $(0,1)$ for sufficiently small step-size $\mu$), we can finally write: 
 \begin{align}
 \vspace{-0.2 cm}
 \expec \|  \bwt_{k,i}\|^2&\leq \left(1 - \frac{\mu \nu_k}{2} +  \frac{ 2 \mu^2  \beta_{s,k}^2}{1 -  \frac{\mu \nu_k}{2}}  \right) \expec  \Vert  \bwt_{k,i-1} \Vert ^2 \notag\\
 & +  \frac{ 2 \mu^2  \beta_{s,k}^2}{1 -  \frac{\mu \nu_k}{2}}  \vert \vert w_{k,\eta}^o \vert \vert^2 + \frac{ \mu^2 \sigma_{s,k} ^2 }{1 -  \frac{\mu \nu_k}{2}} + \mu\eta^2\frac{8e_k^2}{\nu_k}.\label{eq: intermediate MSP new}
 \vspace{-0.2 cm}
 \end{align}
It can be shown that the mean-square perturbation vector at time $i$, which is defined in~\eqref{eq: definition of MSP}, can be bounded according to~\eqref{eq: recursion of the MSP}. 
 By iterating~\eqref{eq: recursion of the MSP} starting from $i=1$, we get: 
 \vspace{-0.2 cm}
  \begin{equation}
 \text{MSP}_i \preceq A^{i} \text{ MSP}_{0} +\mu\sum_{j=0}^{i-1} A^j (c + \mu d).
 \vspace{-0.1 cm}
 \end{equation}  
The matrix $A$ in~\eqref{eq: A definition} can be guaranteed to be stable, i.e., its spectral radius, $\rho(A)$, is less than one. To see this, we start by noting that the spectral radius of a matrix is upper bounded by any of its induced norms, namely,
  \begin{equation}
  \vspace{-0.1 cm}
  \rho (A) \leq \Vert A \Vert_{\infty} = \underset {1 \leq k \leq K}{\max} \left\{ 1 - \frac{\mu \nu_k}{2} + \frac{ 2 \mu^2  \beta_{s,k}^2}{1 -  \frac{\mu \nu_k}{2}}  \right\},\label{eq: infinity norm of A}
  \end{equation}
 where $\Vert A \Vert_{\infty}$ is the induced $\ell_{\infty}$-norm of $A$ given by the maximum of the absolute row sums. Then, to guarantee the stability of $A$, we can choose the step-size $\mu$ according to:
 \vspace{-0.1 cm}
  \begin{equation}
  \vspace{-0.2 cm}
  \label{eq: cond}
  0 < \mu <  \underset {1 \leq k \leq K}{\min} \left \{ \frac{\nu_k}{4 \beta_{s,k}^2 + \frac{\nu_k^2}{2}} \right \}.
  \end{equation}
  In this case, we have: 
  \vspace{-0.3 cm}
  \begin{equation}
  \limsup_{i \to \infty} \text{ MSP}_i \preceq  \mu \sum_{j=0}^{\infty} A^{j } ( c + \mu d).
  \vspace{-0.2 cm}
  \end{equation}
Using the submultiplicative and subadditivity properties of the induced infinity norm, we obtain:   
\vspace{-0.2 cm}
  \begin{align}
\left\| \limsup_{i \to \infty} \text{MSP}_i\right\|_{\infty} &\leq  \mu \left\|\sum_{j=0}^{\infty} A^{j }\right\|_{\infty} \left\| c + \mu d\right\|_{\infty} \notag\\
& \leq  \mu  \sum_{j=0}^{\infty}\left\| A^{j }\right\|_{\infty} \lVert c + \mu d\rVert_{\infty} \notag\\
& \leq  \mu  \sum_{j=0}^{\infty}\left\| A\right\|^j_{\infty} \lVert c + \mu d\rVert_{\infty} \notag\\
&= \mu \frac{\lVert c + \mu d\rVert_{\infty} }{1 - \lVert A \rVert_{\infty}},  \label{eq: subprop}
\end{align}
where we used the fact that $ \Vert A \Vert_{\infty} <1$ under condition~\eqref{eq: cond}. Now, let us quantify $1 - \lVert A \rVert_{\infty}$. To that end, let:
\begin{equation}
  \zeta_k \triangleq \frac{ \nu_k}{2} - \frac{ 2 \mu  \beta_{s,k}^2}{1 -  \frac{\mu \nu_k}{2}},
  \end{equation}
which is guaranteed to be in $(0,1)$ under condition~\eqref{eq: cond}.  From~\eqref{eq: infinity norm of A}, we can write:
  \begin{equation}
   \Vert A \Vert_{\infty} = \underset {1 \leq k \leq K}{\max} \left\{ 1 - \mu \zeta_k \right\} = 1 - \mu  \underset {1 \leq k \leq K}{\min} \zeta_k.
  \end{equation}
 Substituting into~\eqref{eq: subprop}, we obtain: 
  \begin{equation}
  \vspace{-0.1 cm}
\left\| \limsup_{i \to \infty} \text{ MSP}_i\right\|_{\infty} \leq  \frac{\Vert c + \mu d\Vert_{\infty} }{ \underset {1 \leq k \leq K}{\min} \zeta_k}.
\vspace{-0.1 cm}
  \end{equation}
From definitions~\eqref{eq: c} and~\eqref{eq: d}, we have $ \Vert \mu d \Vert_{\infty} = O(\mu)$ and $\Vert c \Vert_{\infty} = O(\mu^{2\alpha})$ when $\eta=\kappa\mu^{\alpha}$. Now, since $\Vert c + \mu d\Vert_{\infty} \leq~\Vert c\Vert_{\infty} + \Vert\mu d\Vert_{\infty} $, we can conclude that~\eqref{eq: result} holds when $\alpha \geq \frac{1}{2}$, which completes the proof.
\end{proof}
\section{Proximal  operators of various non-smooth co-regularizers}
\label{sec: Section 3}
For a wider applicability of the proximal-based approach~\eqref{eq: algor}, and to facilitate comparisons among different sparsity-based co-regularizers, we provide in this section closed-form expressions for various forms of weighted sum of non-smooth functions. Specifically, we address: $i)$ elastic net functions, $ii)$ $\ell_0$-norm functions, and $iii)$ $\ell_1$-norm (which can be seen as a particular case of the elastic net) functions.
\vspace{-0.2 cm}
\subsection{Proximal operator of weighted sum of elastic net regularizations}
\label{subsec: elastic net norm}
The elastic net regularization combines $\ell_1$-norm and squared $\ell_2$-norm, promoting grouping effects~\cite{Zou2005regularization}. Specifically, it encourages both sparsity (via the $\ell_1$-norm) and smoothness (via the squared $\ell_2$-norm), and is defined as:
\vspace{-0.1 cm}
\begin{equation}
\vspace{-0.2 cm}
f(x)=\lVert x \rVert_1 + \frac{ \beta }{2} \lVert x \rVert_2^2,
\end{equation}
where $\beta>0$ is a regularization parameter controlling the relative contribution of the squared $\ell_2$-norm. As co-regularizer, the function $\boldsymbol{g}_{k,i}(w_k)$ in~\eqref{eq: gpsi} takes the following form:
 \begin{equation}
 \vspace{-0.2 cm}
 \label{eq: elastic norm with beta}
 \boldsymbol{g}_{k,i}(w_k)=\sum_{ \ell \in \cN_k} p_{k\ell} \left(\lVert w_k - \bpsi_{\ell,i} \rVert_1 + \frac{ \beta }{2} \lVert w_k - \bpsi_{\ell,i}  \rVert_2^2 \right),
\end{equation}
which can be re-written alternatively as:
\vspace{-0.3 cm}
\begin{align}
\vspace{-0.1 cm}
\boldsymbol{g}_{k,i}(w_k)&=\sum_{ \ell \in \cN_k} p_{k\ell} \Big( \sum_{m=1}^{M} \left \vert [w_k]_m - [\bpsi_{\ell,i}]_m \right \vert \notag\\
& \qquad + \frac{\beta}{2}\left([w_k]_m - [\bpsi_{\ell,i}]_m \right)^2 \Big) \notag\\
&= \sum_{m=1}^{M} \bphi_{k,i,m}\left([w_k]_m \right),\label{eq: elastic norm g}
\vspace{-0.6 cm}
\end{align}
\vspace{-0.4 cm}
where
\begin {align}
\vspace{-0.2 cm}
& \bphi_{k,i,m}([w_k]_m)= \sum_{ \ell \in \cN_k} p_{k\ell} \Big(\left \vert [w_k]_m - [\bpsi_{\ell,i}]_m \right \vert \notag\\
& \qquad + \frac{\beta}{2} \left([w_k]_m - [\bpsi_{\ell,i}]_m \right)^2 \Big).\label{eq: phi k i m}
\vspace{-0.3 cm}
\end{align}
Since $\bg_{k,i}(w_k)$ in~\eqref{eq: elastic norm g} is fully separable, its proximal operator can be evaluated component-wise  according to~\cite{Parikh2014proximal}: 
\vspace{-0.1 cm}
 \begin{align}
 \vspace{-0.2 cm}
 &\left[\prox_{\mu\eta \boldsymbol{g}_{k,i}} (\bpsi_k(i))\right]_m \notag\\
 & \quad = \prox_{\mu\eta \bphi_{k,i,m}} \left([\bpsi_k(i)]_m\right), \qquad \forall m=1,...,M,\label{eq: proximal operator of fully separable function}
 \vspace{-0.4 cm}
 \end{align}
 where $\bphi_{k,i,m}:\mathbb{R}\rightarrow \mathbb{R}$ is given by~\eqref{eq: phi k i m}.
 
 For simplicity, let us derive the proximal operator of the function $h(\cdot)$, which has a form similar to $\bphi_{k,m,i}(\cdot)$:
 \vspace{-0.2 cm}
 \begin{equation}
 \vspace{-0.2 cm}
 \label{eq: h}
  h(x)= \sum_{j=1}^{J}c_j \left(\vert x - b_j \vert + \frac{\beta}{2} (x - b_j)^2 \right),
  \end{equation}
  where $c_j > 0$ for all $j$,  and $b_1 < ...< b_J$. This ordering is assumed for convenience of derivation and does not affect the final result. From the optimality condition of~\eqref{eq: prox}, we have: 
\begin{equation}
0 \in \partial h\left(\prox_{\gamma h}\left(v\right)\right) + \frac{1}{\gamma}\left(\prox_{\gamma h}\left(v\right) - v\right),
\end{equation}
\vspace{-0.2 cm}
or, alternatively,
\vspace{-0.2 cm}
\begin{equation}
\vspace{-0.1 cm}
\label{eq: opt}
v - \prox_{\gamma h}(v) \in \gamma \partial h\left(\prox_{\gamma h}(v)\right).
\end{equation}
Since $x \in \mathbb{R}$, and $c_j$ and $\beta$ are $\geq 0$, we have \cite[Lemma 10]{Polyak2021basics}: 
\begin{align}
&\partial h(x)= \partial \left(\sum_{j=1}^{J}c_j \left(\vert x - b_j \vert + \frac{\beta}{2} \left(x - b_j \right)^2\right)\right) \notag\\
& \quad = \sum_{j=1}^{J}c_j \left( \partial \vert x - b_j \vert + \frac{\beta}{2} \nabla \left(x - b_j \right)^2 \right).
\end{align}
\vspace{-0.1 cm}
By using the fact that:
\vspace{-0.2 cm}
\begin{equation}
\partial |x-b_j|=\left\lbrace\begin{array}{ll}
1,&\text{if }x>b_j\\
\left[-1,1\right],&\text{if }x=b_j\\
-1,&\text{if }x<b_j
\end{array}
\right.
\end{equation}
we can show that the subdifferential of the real valued convex function $h(x)$ defined in~\eqref{eq: h} is given by: 
\vspace{-0.2 cm}
\begin{align}
\vspace{-0.2 cm}
\label{eq: subh}
&\partial h(x)= \notag \\
&\begin{cases}
  -\sum\limits_{j=1}^{J}c_j + \beta \sum\limits_{j=1}^{J} c_j\left(x -b_j \right), &\text{if } x < b_1 \\
  c_1[-1,1] - \sum\limits_{j=2}^{J}c_j + \beta \sum\limits_{j=2}^{J}c_j\left(b_1 - b_j \right), & \text{if }  x=b_1 \\
   c_1 -\sum\limits_{j=2}^{J}c_j + \beta \sum\limits_{j=1}^{J}c_j\left(x -b_j \right), &\text{if } b_1<x<b_2 \\
 \vdots\\
 \sum\limits_{j=1}^{J-1}c_j + c_J [-1,1]  + \beta\sum\limits_{j=1}^{J-1} c_j\left(b_J - b_j \right) , & \text{if } x=b_J  \\
\sum\limits_{j=1}^{J} c_j + \beta \sum\limits_{j=1}^{J}c_j\left(x -b_j\right), &\text{if } x>b_J .
\end{cases} 
\vspace{-0.5 cm}
\end{align}
From \eqref{eq: opt} and \eqref{eq: subh}, extensive but routine calculations lead to the following implementation for evaluating the proximal operator of $h$ in \eqref{eq: h}. Let us decompose $\mathbb{R}$ into $J + 1$ intervals such that $\mathbb{R}=\bigcup_{n=0}^{J} \mathcal{I}_n$ where: 
\vspace{-0.2 cm}
\begin{align}
\vspace{-0.4 cm}
\mathcal{I}_0 \quad& \triangleq\quad  \Big] -\infty , b_1 - \gamma \sum_{j=1}^{J} c_j + \beta \gamma \sum_{j=2}^{J}c_j  \left(b_1 - b_j\right) \Big[, \\
\mathcal{I}_n \quad& \triangleq \quad \mathcal{I}_{n,1} \cup \mathcal{I}_{n,2} \qquad  n = 1, \ldots, J ,
\vspace{-0.6 cm}
\end{align}
with 
\vspace{-0.2 cm}
\begin{align}
\vspace{-0.4 cm}
\mathcal{I}_{n,1} &\triangleq \Bigg[ b_n - \gamma\left(\sum_{j=n}^{J} c_j - \sum_{j=1}^{n-1}c_j\right) + \beta\gamma \sum_{j=1}^{J} c_j(b_n  - b_j),  \notag\\
&   b_n - \gamma \left(\sum_{j=n+1}^{J} c_j - \sum_{j=1}^{n}c_j\right) + \beta \gamma \sum_{j =1}^{J} c_j(b_n - b_j) \Bigg[, \notag\\
&\qquad  \qquad\qquad \qquad  \qquad\qquad\qquad  \qquad\qquad  n = 1, \ldots, J, \\
\mathcal{I}_{n,2} &\triangleq \Bigg[ b_n - \gamma\left(\sum_{j=n+1}^{J} c_j - \sum_{j=1}^{n}c_j\right) + \beta\gamma \sum_{j =1}^{J} c_j(b_n  -b_j),  \notag\\
& b_{n+1} - \gamma \left(\sum_{j=n+1}^{J} c_j - \sum_{j=1}^{n}c_j\right) + \beta \gamma \sum_{j =1}^{J} c_j(b_{n+1} - b_j)\Bigg[, \notag\\
&\qquad  \qquad\qquad \qquad  \qquad\qquad\qquad  \qquad n = 1, \ldots, J-1,\\
\mathcal{I}_{J,2} &\triangleq \Bigg[ b_J + \gamma\sum_{j=1}^{J} c_j + \beta\gamma \sum_{j =1}^{J}c_j \left(b_J  -  b_j\right), +\infty \Bigg[.
\end{align}
\vspace{-0.1 cm}
Then, depending on the interval to which $v$ belongs, we can evaluate the proximal operator according to:
\vspace{-0.2 cm}
\begin{align}
\label{eq: proxelastic}
&\prox_{\gamma h}(v)= \notag \\
&\quad \left\lbrace\begin{array}{ll}
  \frac{v + \gamma\left( \sum\limits_{j=1}^{J}c_j \right)+ \beta\gamma\left(   \sum\limits_{j=1}^{J}c_jb_j\right)}{ 1 + \beta\gamma\left(\sum\limits_{j=1}^{J} c_j \right)},  & \text{if } v \in \mathcal{I}_0 \\
  b_n ,& \text{if } v \in \mathcal{I}_{n,1} \\
 \frac{v + \gamma \left(\sum\limits_{j=n+1}^{J}c_j - \sum\limits_{j=1}^{n}c_j \right) + \beta \gamma\left( \sum\limits_{j=1}^{J}c_jb_j\right)}{ 1 + \beta\gamma\left(\sum\limits_{j=1}^{J}c_j \right)},   & \text{if } v \in \mathcal{I}_{n,2}.
 \end{array}\right.
\end{align}
\subsection{Proximal operator of weighted sum of $\ell_0$-norms}
\label{subsec: l0 norm}
Consider neighboring nodes $k$ and $\ell$, and let $\delta_{k,\ell}$ denote the difference vector $w_k - w_{\ell}$. When the parameter vectors across agents share many similar entries and only a relatively small number of distinct entries, the sparsity of $\delta_{k,\ell}$ can be promoted by considering its pseudo $\ell_0$-norm, which counts the number of nonzero entries. Although $\|\delta_{k,\ell}\|_0$ is a non-convex co-regularizer, presents computational challenges, and falls outside the scope of the analysis conducted in the previous section, we derive in this section a closed-form expression for the proximal operator of a weighted sum of $\ell_0$ -norms. This derivation is motivated by its optimality within the class of sparsity-based co-regularizers.
 
  When employed as co-regularizer, the function $\boldsymbol{g}_{k,i}(w_k)$ in~\eqref{eq: gpsi} takes the following form~\cite{Beck2017first-order}:
  \vspace{-0.1 cm}
 \begin{align}
 \vspace{-0.1 cm}
 \boldsymbol{g}_{k,i}(w_k)&=\sum_{ \ell \in \cN_k} p_{k\ell}\left(\lambda\lVert w_k - \bpsi_{\ell,i} \rVert_0\right) \notag\\
& = \sum_{\ell \in \mathcal{N}_k} p_{k\ell} \left(\sum_{m=1}^{M} I\left([w_k - \bpsi_{\ell,i}]_m\right)\right)\notag\\ 
& = \sum_{m=1}^{M} \bphi_{k,i,m}\left([w_k]_m\right),
\vspace{-0.3 cm}
\end{align}
where $\lambda$ is some positive parameter, the function $I:\mathbb{R}\rightarrow\mathbb{R}$ is defined as:
\vspace{-0.1 cm}
\begin{equation}
\label{eq: showing lambda}
I(x)=\left\lbrace\begin{array}{ll}
\lambda,&\text{if }x\neq 0\\
0,&\text{if }x= 0
\end{array}\right.
\end{equation} 
and the function $\bphi_{k,i,m}:\mathbb{R}\rightarrow\mathbb{R}$ is defined as:
\begin{equation}
\vspace{-0.2 cm}
\bphi_{k,i,m}\left([w_k]_m\right)= \sum_{\ell \in \cN_k} p_{k\ell}  I\left([w_k - \bpsi_{\ell,i}]_m\right). \label{eq: phi in the case of the l0 norm}
\end{equation} 
Similarly to the previous example, since $\boldsymbol{g}_{k,i}(w_k)$ is fully separable, its proximal operator can be evaluated component-wise according to~\eqref{eq: proximal operator of fully separable function} with $\bphi_{k,i,m}(\cdot)$ given by~\eqref{eq: phi in the case of the l0 norm}. By using similar arguments as those used in~\cite[Example 6.10]{Beck2017first-order}, we can show that the proximal operator of the function $\bphi_{k,i,m}(\cdot)$ is equal to the proximal operator of the function $\bphi'_{k,i,m}(\cdot)$:
\begin{equation}
\vspace{-0.2 cm}
\bphi'_{k,i,m}\left([w_k]_m\right)= \sum_{\ell \in \cN_k} p_{k\ell}  J\left([w_k - \bpsi_{\ell,i}]_m\right), \label{eq: phi' in the case of the l0 norm}
\end{equation} 
which is defined in terms of the function $J(x)\triangleq I(x)-\lambda$.

For simplicity, we derive  the proximal operator of the function $h_0(\cdot)$, which has a form similar to $\bphi'_{k,i,m}(\cdot)$: 
\vspace{-0.3 cm}
 \begin{equation}
 \vspace{-0.2 cm}
 \label{eq: definition of h_0 x}
h_0(x)= \sum_{j=1}^{J}c_j J(x-b_j),
 \end{equation}
where $\{c_j\}$ are non-negative coefficients, and $b_1\neq\ldots \neq b_J$ are real-valued parameters.  From~\eqref{eq: prox}, note that: 
\vspace{-0.1 cm}
\begin{equation}
\vspace{-0.2 cm}
 \prox_{\gamma h_0}(v) =  \arg\min_{x \in \mathbb{R}} \widetilde{h}_0(x,v),
\end{equation}
\vspace{-0.2 cm}
  where
\begin{align}
   \widetilde{h}_0(x,v) &\triangleq h_0(x) +  \frac{1}{2 \gamma} (x - v)^2 \notag \\
 &   \overset{ \eqref{eq: definition of h_0 x}}= 
\begin{cases} 
  \frac{1}{2\gamma}(x-v)^2 ,& \text{if } x \neq b_1  \neq \dots \neq b_J\\
  -\lambda c_1 + \frac{1}{2\gamma}(b_1-v)^2, & \text{if } x=b_1\\
  -\lambda c_2 + \frac{1}{2\gamma}(b_2-v)^2 ,& \text{if } x=b_2\\
  \vdots\\
  -\lambda c_J + \frac{1}{2\gamma}(b_J-v)^2, & \text{if } x=b_J.
\end{cases}
\end{align}
Evaluating the proximal operator requires distinguishing between two cases. The \emph{first case} arises when there exists an index $i\in\{1,\ldots, J\}$ such that $v=b_i$. In this case, we have:
\vspace{-0.2 cm}
\begin{align}
&\prox_{\gamma h_0}(v) = \notag \\
& \arg\min_{x \in \mathbb{R}} 
\left\lbrace \begin{array}{ll}
    \frac{1}{2\gamma}(x-b_i)^2, \qquad \qquad \text{if } x \neq b_1 \neq \dots \neq b_J \\
  -\lambda c_j + \frac{1}{2\gamma}(b_j-b_i)^2, \text{ if }  x=b_j, \quad j=1,\ldots,J
  \end{array}
\right.
\vspace{-0.2 cm}
\end{align}
Let $f(b_j)\triangleq  -\lambda c_j + \frac{1}{2\gamma}(b_j-b_i)^2$ and let $f_{\min}\triangleq \min_{b_1,\ldots,b_J}f(b_j)$. If we let $\Omega$ denote the set of $\{b_j\}$ for which  $f(b_j)=f_{\min}$, then we have:
\vspace{-0.2 cm}
\begin{equation}
\vspace{-0.1 cm}
\prox_{\gamma h_0}(v)=\{ b_j\in\Omega\}.
\end{equation}
%
The \emph{second case} arises when $ v \neq b_j \text{ , }\forall j=1,\ldots,J $. In this case, we have:
\vspace{-0.2 cm}
\begin{align}
\vspace{-0.2 cm}
&\prox_{\gamma h_0}(v)= \notag \\
%
 & \quad \arg\min_{x \in \mathbb{R}}   \begin{cases}
   \frac{1}{2\gamma}(x-v)^2  ,& \text{if } x \neq b_1 \neq \dots \neq b_J, \\
   -\lambda c_j + \frac{1}{2\gamma}(b_j-v)^2 ,& \text{if } x=b_j,\quad j=1,\ldots,J .
\end{cases}
\vspace{-0.3 cm}
\end{align}
Similarly, let $f(b_j)\triangleq  -\lambda c_j + \frac{1}{2\gamma}(b_j-v)^2$ and let $f_{\min}\triangleq \min_{b_1,\ldots,b_J}f(b_j)$. If we let $\Omega$ denote the set of $\{b_j\}$ for which  $f(b_j)=f_{\min}$, then we have:
\begin{align}
&\prox_{\gamma h_0}(v)=    \notag \\
%
& \quad  \arg\min_{x \in \mathbb{R}}   \begin{cases}
   \frac{1}{2\gamma}(x-v)^2  ,& \text{if } x \neq b_1 \neq \dots \neq b_J, \\
   -\lambda c_j + \frac{1}{2\gamma}(b_j-v)^2 ,& \text{if } x=b_j\in\Omega.
\end{cases}
\vspace{-0.2 cm}
\end{align}
The minimum of $  \frac{1}{2\gamma}(x-v)^2$ is attained at $x=v$ with a minimal value of $0$. If $f_{\min}<0$ (i.e., $|v-b_j|<\sqrt{2\gamma\lambda c_j}$), then the minimizer is $b_j\in\Omega$. On the other hand, if $f_{\min}>0$ (i.e., $|v-b_j|>\sqrt{2\gamma\lambda c_j}$), then the unique minimizer is $v$. Finally, if $f_{\min}=0$, then $v$ and $b_j\in\Omega$ (i.e., $|v-b_j|=\sqrt{2\gamma\lambda c_j}$) are minimums. Consequently, we can write:
\vspace{-0.2 cm}
\begin{align}
&\prox_{\gamma h_0}(v) = \notag\\
&\begin{cases}
  v , \\
\quad    \text{if } v > \sqrt{2\gamma\lambda c_j} + b_j \quad \text{and} \quad v < -\sqrt{2\gamma\lambda c_j} + b_j,~b_j\in\Omega \\
  \{ v, b_j\in\Omega\} , \\
 \quad   \text{if } v = \sqrt{2\gamma\lambda c_j} + b_j \quad \text{~or~} \quad v = -\sqrt{2\gamma\lambda c_j} + b_j \\
  \{b_j\in\Omega \} , \\
\quad   \text{if } -\sqrt{2\gamma\lambda c_j} + b_j < v < \sqrt{2\gamma\lambda c_j} + b_j .\\
\end{cases}
\vspace{-0.4 cm}
\end{align}
\vspace{-0.6 cm}
\subsection{Proximal operator of weighted sum of $\ell_1$-norms and reweighted $\ell_1$-norms}
\label{subsec: reweighted l1-norm}
A widely used \emph{convex} alternative to the $\ell_0$-norm for promoting sparsity is the $\ell_1$-norm~\cite{Nassif2016proximal, Candes2008enhancing} which, when applied to the vector $\delta_{k,\ell}=w_k-w_{\ell}$, can be expressed as:
\vspace{-0.2 cm}
\begin{equation}
\vspace{-0.1 cm}
\label{l1}
\Vert \delta_{k,\ell} \Vert_1= \sum_{m=1}^{M} \vert [\delta_{k,\ell} ]_m \vert.
\end{equation}
Since the $\ell_1$-norm uniformly shrinks all the components of a vector and does not distinguish between zero and non-zero entries, it is common to employ the reweighted  $\ell_1$-norm, which is an adaptive weighted formulation of the $\ell_1$-norm~\cite{Candes2008enhancing}. It was designed to reduce the bias induced by the $\ell_1$-norm, and consequently, enhance the penalization of the non-zero entries of a vector. Given the weight vector ${\alpha}_{k,\ell} = [\alpha_{k\ell}^1, ... , \alpha_{k\ell}^M]$, with $\alpha_{k\ell}^m > 0$ for all $m$, the reweighted $\ell_1$-norm is defined as~\cite{Candes2008enhancing}: 
\vspace{-0.2 cm}
\begin{equation}
\vspace{-0.2 cm}
\label{rl1}
\Vert \delta_{k,\ell} \Vert_{1,\text{rew}} = \sum_{m=1}^{M} \alpha_{k\ell}^m \vert [\delta_{k,\ell} ]_m \vert.
\end{equation}
The weights $\alpha_{k\ell}^m$ are usually chosen according to:
\vspace{-0.1 cm}
\begin{equation}
\vspace{-0.1 cm}
\alpha_{k\ell}^m = \frac{1}{ \epsilon + \vert [\delta_{k,\ell}^o ]_m \vert } , \quad m=1, \ldots , M,
\end{equation}
where $ \delta_{k,\ell}^o = w_k^o - w_{\ell}^o$~\cite{Candes2008enhancing, Nassif2016proximal}. Since the optimum parameter vectors $\{w^o_k,w^o_{\ell}\}$ in~\eqref{eq: local minimizers} are not known beforehand, it is common to employ the following adaptive implementation for the {weights~\cite{Candes2008enhancing, Nassif2016proximal}:}
\vspace{-0.1 cm}
\begin{equation}
\vspace{-0.1 cm}
\boldsymbol{\alpha}_{k\ell}^m (i)= \frac{1}{ \epsilon + \vert [\bpsi_{k,i}-\bpsi_{\ell,i} ]_m \vert }, \qquad m=1,\ldots , M,
\end{equation}
at each iteration $i$, where $\epsilon$ is a small constant preventing the denominator from vanishing, and $\bpsi_{k,i}-\bpsi_{\ell,i}$ is the available estimate of $\delta_{k,\ell}^o$ at nodes $k$ and $\ell$, iteration $i$. 
When we employ the $\ell_1$ or reweighted $\ell_1$-norm as co-regularizer, the function $\boldsymbol{g}_{k,i}(w_k)$ in~\eqref{eq: gpsi} takes the following form:
\vspace{-0.2 cm}
\begin{align}
\vspace{-0.2 cm}
\boldsymbol{g}_{k,i}(w_k)&=\sum_{ \ell \in \cN_k} p_{k\ell} \left( \sum_{m=1}^{M} \boldsymbol{\alpha}_{k\ell}^m (i)\left \vert [w_k]_m - [\bpsi_{\ell,i}]_m \right \vert \right)\notag\\
&= \sum_{m=1}^{M} \bphi_{k,i,m}\left([w_k]_m \right),\label{eq: l1 norm g}
\end{align}
\vspace{-0.2 cm}
where
\vspace{-0.1 cm}
\begin {equation}
\bphi_{k,i,m}([w_k]_m)= \sum_{ \ell \in \cN_k}\boldsymbol{p}_{k\ell}^m (i)  \left \vert [w_k]_m - [\bpsi_{\ell,i}]_m \right \vert ,\label{eq: phi k i m in the l1 norm}
\vspace{-0.1 cm}
\end{equation}
with $\boldsymbol{p}_{k\ell}^m (i)= p_{k\ell}$ in the $\ell_1$-norm case and  $\boldsymbol{p}_{k\ell}^m (i) = p_{k\ell}\boldsymbol{\alpha}_{k\ell}^m (i) $ in the reweighted  case. By comparing~\eqref{eq: l1 norm g} and~\eqref{eq: phi k i m in the l1 norm} with~\eqref{eq: elastic norm g} and~\eqref{eq: phi k i m}, we can see that the proximal operator for the $\ell_1$-norm can be obtained from the proximal operator of the elastic-net regularization by setting $\beta$ to $0$. Similarly, by adjusting the weights and setting $\beta=0$, the proximal operator of the reweighted $\ell_1$-norm can also be computed.

For illustration purposes, we represent in Fig.~\ref{fig: proximal operator} the proximal operator of three real-valued functions. The first function consists of a weighted sum of  elastic net regularizations, and its proximal operator is computed according to~\eqref{eq: proxelastic}, and is illustrated in Fig.~\ref{fig: proximal operator} (left). The second one consists of a weighted sum of  $\ell_0$-norms, and its proximal operator is computed according to Sec.~\ref{subsec: l0 norm}, and is illustrated in Fig.~\ref{fig: proximal operator} (middle). The third function  consists of a weighted sum of  $\ell_1$-norms, and its proximal operator is computed according to~\eqref{eq: proxelastic} with $\beta=0$, and is illustrated in Fig.~\ref{fig: proximal operator} (right). 
\begin{figure*}[t]
\centering
\includegraphics[scale=0.33]{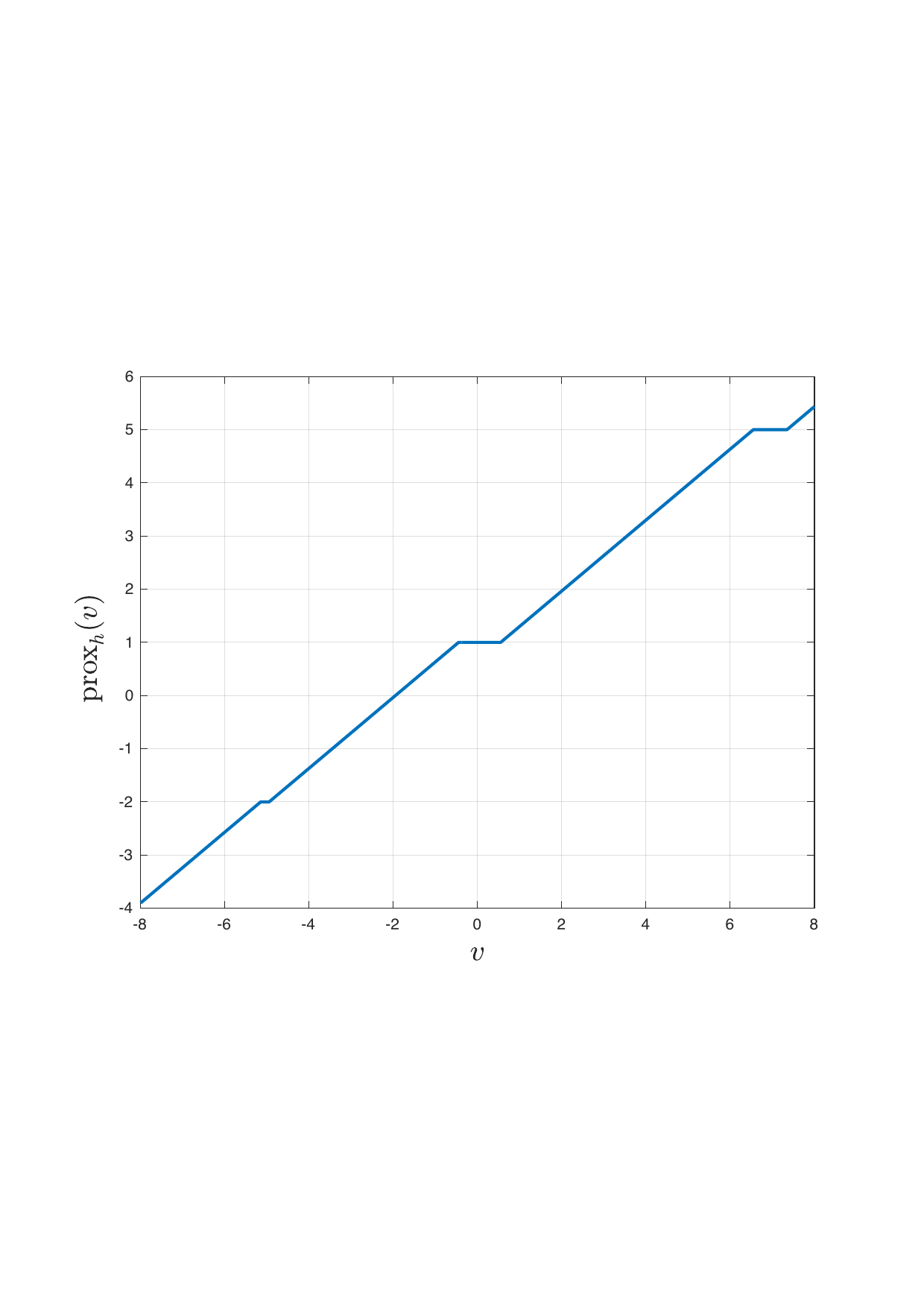}
\includegraphics[scale=0.33]{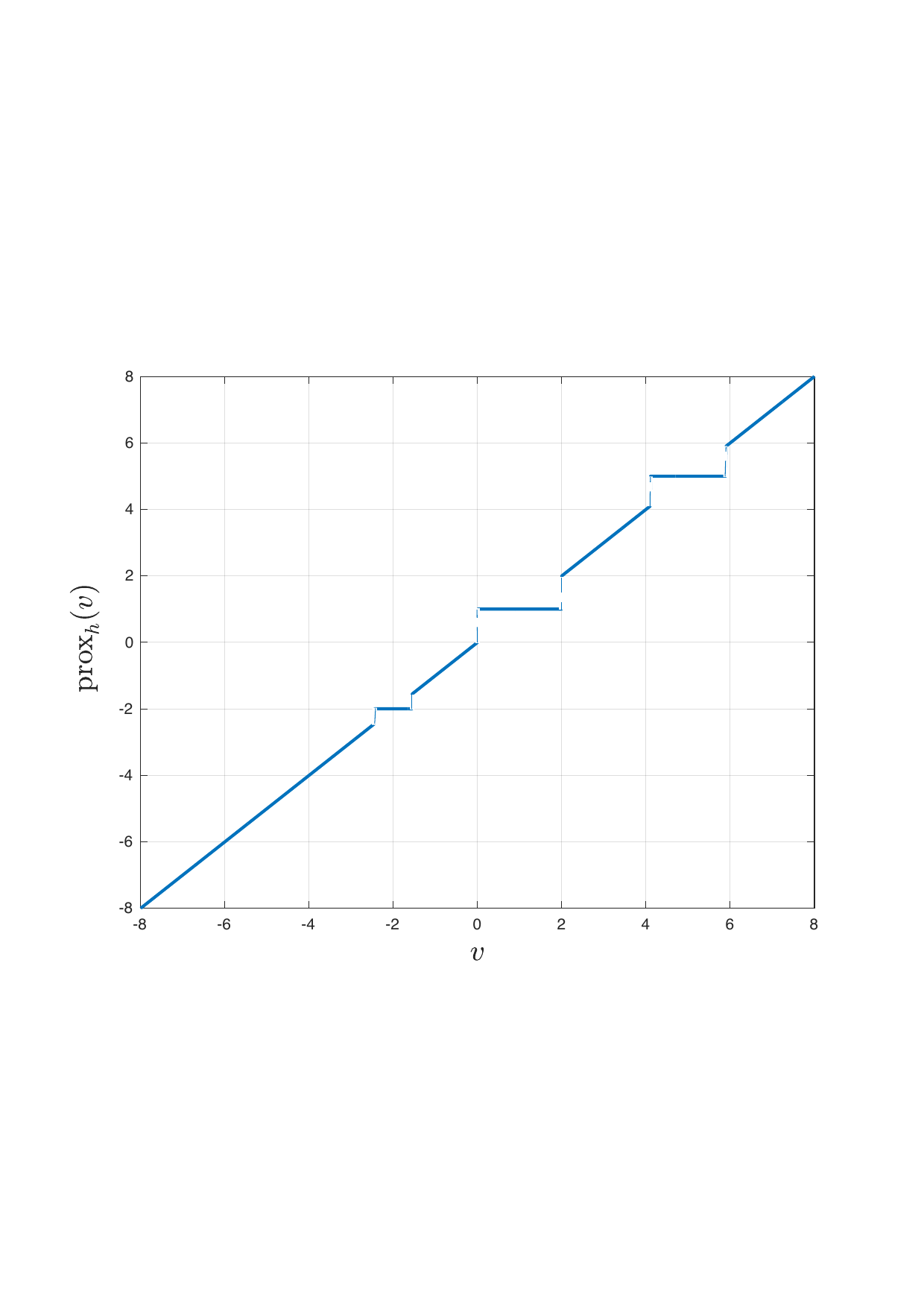}
\includegraphics[scale=0.33]{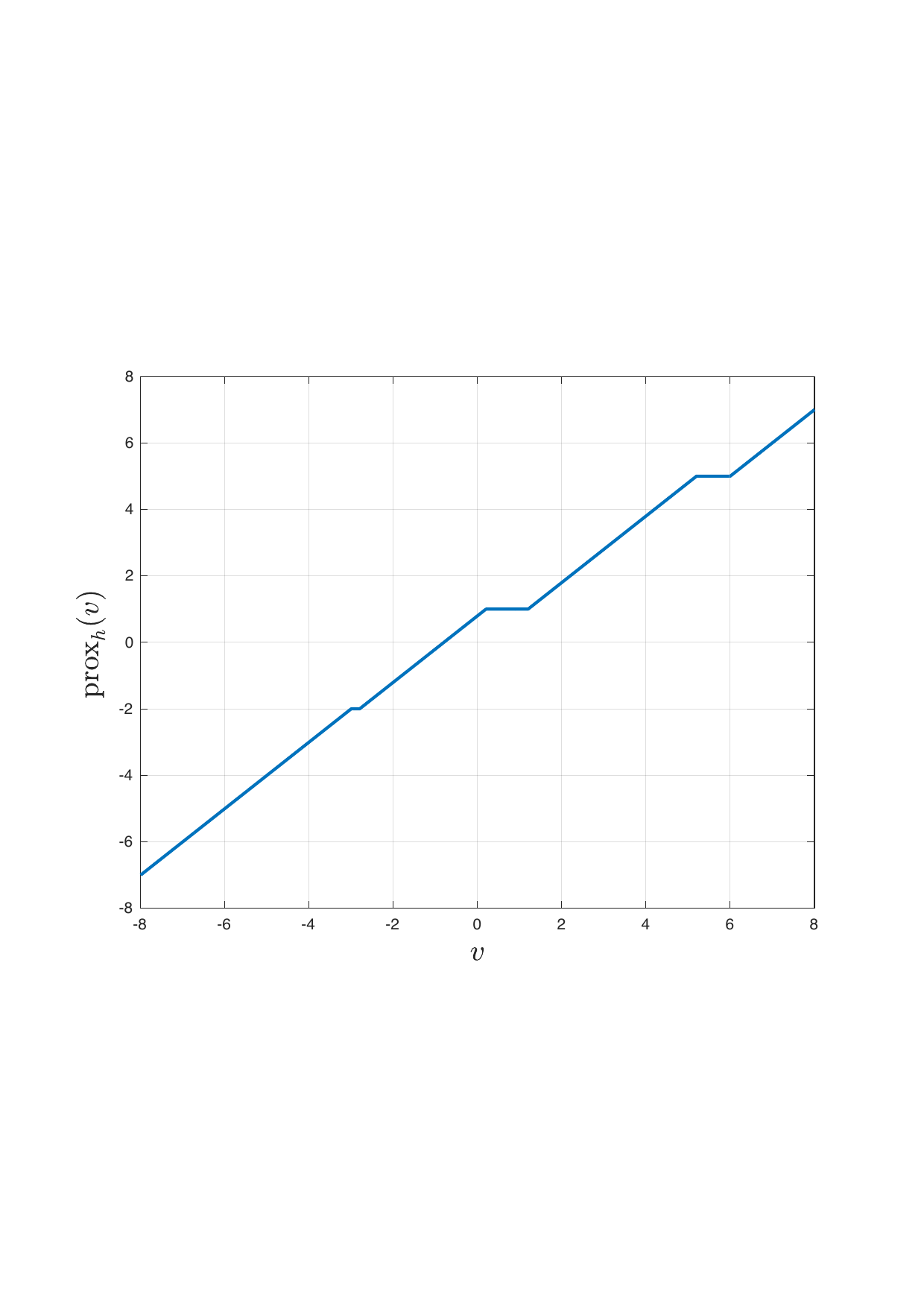}
 \caption{Proximal operator $ \prox_{\gamma h}(v)$ of different sparsity-based functions $h:\mathbb{R} \rightarrow \mathbb{R}$ (with $\gamma=1$). \emph{(Left)} Elastic net-based function where $h(x)= \frac{1}{10} \left(\left\vert x+2 \right\vert  +~\frac{1}{4}(x+~2)^2\right)+ \frac{1}{2} \left(\left\vert x-1 \right\vert  +~\frac{1}{4}(x-1)^2\right) + \frac{2}{5}\left(\left\vert x-5 \right\vert +  \frac{1}{4}(x-5)^2\right)$. \emph{(Middle)} $\ell_0$-based function where  $h(x)= \frac{1}{10} \left\Vert x+2 \right\Vert_{0} + \frac{1}{2} \left\Vert x-1 \right\Vert_{0} + \frac{2}{5}\left\Vert x-5 \right\Vert_{0}$. \emph{(Right)}  $\ell_1$-based function where  $h(x)= \frac{1}{10} \left\vert x+2 \right\vert + \frac{1}{2} \left\vert x-1 \right\vert + \frac{2}{5}\left\vert x-5 \right\vert$.}
  \label{fig: proximal operator}
\end{figure*}



\section{Simulations}
\label{sec: Section 4}
In this section, we illustrate the performance of algorithm~\eqref{eq: algor} in the context of mean-square-error (MSE) and logistic regression (LR) networks. We begin by revisiting the definitions of the MSE and LR networks~\cite{Sayed2014adaptation}.
\begin{example}{\emph{(Logistic regression network).}}
Consider agent $k$. Let $\boldsymbol{\gamma}_k(i)$ be a streaming sequence of (class) binary random variables that assume the values $\pm 1$, and let $\boldsymbol{h}_{k,i}$ be the corresponding streaming sequence of  $M \times 1$ real random (feature) vectors with $R_{h,k} = \expec \boldsymbol{h}_{k,i}\boldsymbol{h}_{k,i}^{\top} > 0$. The processes $\{ \boldsymbol{\gamma}_k(i), \boldsymbol{h}_{k,i}\}$ are assumed to be wide-sense stationary. In these problems, we would like to construct a classifier to predict the label $ \boldsymbol{\gamma}_k(i)$ based on the knowledge of the feature vector $\boldsymbol{h}_{k,i}$. To that end, agent $k$ can estimate the vector $w_k^o$ that minimizes the regularized logistic risk function~\cite{Sayed2014adaptation}:
 \begin{equation}
 \label{eq: logistic}
 J_k(w_k) = \expec \ln \left( 1+ e^{-\gamma_k(i)\boldsymbol{h}_{k,i}^{\top}w_k}\right) + \frac{\rho}{2} \Vert w_k \Vert^2,
 \vspace{-0.1 cm}
 \end{equation}
 where $\rho > 0$ is a regularization parameter. Once $w_k^o$ is found, $\widehat{\boldsymbol{\gamma}}_k(i) = \operatorname{sign}(\boldsymbol{h}_{k,i}^{\top} w_k^o)$ can then be used as a decision rule to classify new features. Note first that the cost in~\eqref{eq: logistic} is strongly convex due to the regularization term $\frac{\rho}{2} \Vert w_k \Vert^2$. Moreover, by using approximation \eqref{eq: approx}, we find that the multitask cooperative strategy \eqref{eq: algor} reduces to:
 \vspace{-0.1 cm}
\begin{equation}
\left\{
\begin{aligned} 
    & \bpsi_{k,i} = (1 - \mu \rho) \bw_{k,i-1}  + \mu \gamma_k(i) \boldsymbol{h}_{k,i} \left( \frac{1}{ 1 + e^{\gamma_k(i)\boldsymbol{h}_{k,i}^{\top}w_{k,i-1}}} \right) \\                          
    & \bw_{k,i}    =\operatorname{prox}_{\mu\eta \boldsymbol{g}_{k,i}} ( \bpsi_{k,i} )                                                                        
\end{aligned}
\right.
\end{equation}
\vspace{-0.1 cm}
where  $\boldsymbol{g}_{k,i}(w_k)$ is given by \eqref{eq: gpsi}.
\label{ex: lr}
\end{example}

\begin{example}{\emph{(Mean-square-error (MSE) network).}} In such networks, each agent $k$ is observing a streaming sequence   $\{\bd_k(i), \bu_{k,i}\}$ satisfying the linear regression model:
\begin{equation}
\vspace{-0.2 cm}
\label{eq: linear data model}
\bd_k(i) = \bu_{k,i}^{\top}w_k^o + \bv_k(i) ,
\end{equation}
where $w_k^o$ is some $M \times 1$ vector to be estimated by agent $k$ and $\bv_k(i)$ denotes a zero-mean measurement noise. 
The processes $\{\bu_{k,i}, \bv_k(i)\}$ are assumed to be zero-mean jointly wide-sense stationary with: i) $\expec \bu_{k,i}\bu_{\ell,i}^{\top} = R_{u,k} > 0$ if $k=\ell$ and zero otherwise; $\expec \bv_k(i)\bv_{\ell}(i) = \sigma_{v,k}^2$ if $k=\ell$ and zero otherwise; and iii) $\bu_{k,i}$ and $\bv_k(i)$ are independent of each other. 
To estimate $w^o_k$, each agent can minimize the MSE cost defined as~\cite{sayed2013diffusion}:
\vspace{-0.2 cm}
\begin{equation}
\label{eq: MSE}
J_k(w_k)= \frac{1}{2} \expec \left( \bd_k(i) - \bu_{k,i}^{\top}w_k \right)^2 ,
\end{equation}
which is minimized at $w_k^o$. Alternatively, agent $k$ can employ the cooperative strategy~\eqref{eq: algor} that takes the following form when approximation \eqref{eq: approx} is used:
 \begin{equation}
 \label{mseexap}
\left\{
\begin{aligned} 
  & \bpsi_{k,i} = \bw_{k,i-1} + \mu \bu_{k,i} (\bd_{k,i} - \bu_{k,i}^{\top} \bw_{k,i-1})\\                          
  & \bw_{k,i}    = \operatorname{prox}_{\mu\eta \boldsymbol{g}_{k,i}} ( \bpsi_{k,i} )                                                                           
\end{aligned}
\right.
\end{equation} 
where the function $\boldsymbol{g}_{k,i}(w_k)$ is given by~\eqref{eq: gpsi}.
\label{ex: mse}
\end{example}
\vspace{-0.3 cm}
\subsection{Illustrative simulations}
\label{subsec: illustrative simulations}
We start by considering an MSE network of $K=20$ agents, with the topology shown in Fig.~\ref{fig: experimental setup} \emph{(left)}. The regressors $\bu_{k,i}$ are $10 \times 1$ zero-mean Gaussian with covariance matrices $ \bR_{u,k}= \sigma_{u,k}^2 I_{10}$, where the variances $ \sigma_{u,k}^2$ are randomly generated from the uniform distribution $\mathcal{U}(1, 1.5)$.  The noise variables $\bv_k(i)$ are zero-mean Gaussian with variances $\sigma_{v,k}^2$ generated from the uniform distributions $\mathcal{U}(0.15, 0.25)$. The variances $\{\sigma_{u,k}^2 ,\sigma_{v,k}^2 \}$  are illustrated in Fig.~\ref{fig: experimental setup} \emph{(right)}. The regularization weights are set to $\rho_{k\ell} = \frac{1}{\text{card}\left\{\cN_k \right\}}$ for $\ell \in \cN_k$, where $\text{card}\left\{\cN_k \right\}$ represents the cardinality of the set $\cN_k$. The models $w^o_k$ in~\eqref{eq: linear data model} are generated according to: 
\vspace{-0.1 cm}
\begin{equation}
\vspace{-0.1 cm}
w^o_k=w_{\text{c}}+\delta_k,\qquad k=1,\ldots,20,\label{eq: local models generated sparsely}
\end{equation}
 where $w_{\text{c}}$ is a $10\times 1$ vector randomly generated from the Gaussian distribution $\cN(0,I_{10})$. The vectors $\{\delta_k\}_{k=1}^{10}$ are  $10\times 1$ with all entries equal to $0$ except for the $k$-th entry, which is equal to $1$. Similarly, the vectors $\{\delta_k\}_{k=11}^{20}$ are  $10\times 1$ with all entries equal to $0$ except for the $(k-10)$-th entry, which is equal to $-1$. Consequently, the individual models differ only sparsely across the nodes.

\begin{figure}
\begin{center}
\includegraphics[scale=0.1]{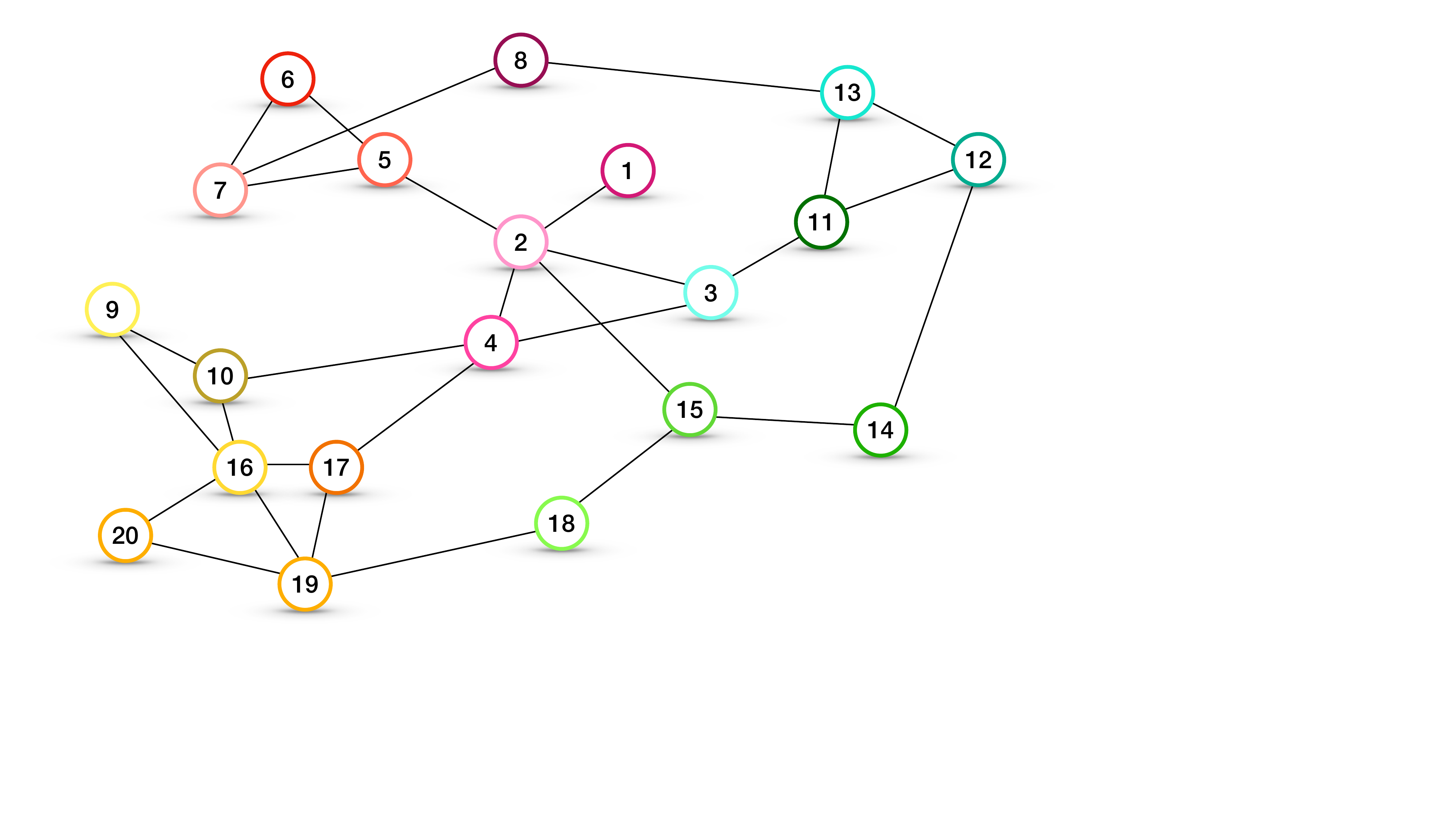}
\includegraphics[scale=0.2]{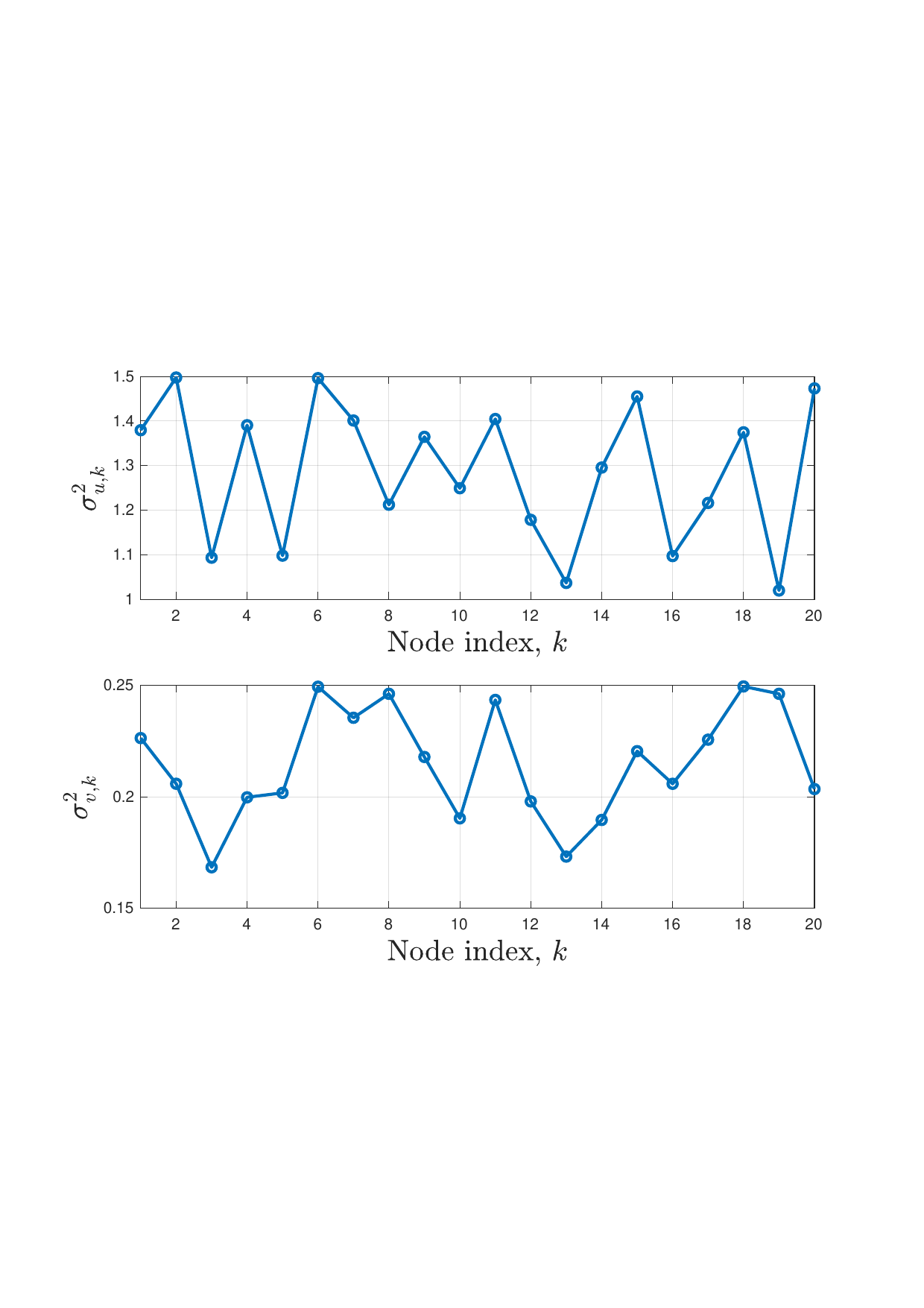}
 \caption{Experimental setup. \emph{(Left)} Network topology.  \emph{(Right)} Regression and noise variances. }
  \label{fig: experimental setup}
\end{center}
\end{figure}
\subsubsection{Illustration of Theorem~\ref{theorem2}} In order to illustrate the theoretical findings in Theorem~\ref{theorem2}, we consider the $\ell_1$-norm co-regularizer (i.e., $f_{k\ell}(w_k,w_{\ell})=\|w_k-w_{\ell}\|_1$) which satisfies Assumption~\ref{ass: ass3}. We set $\eta=\kappa\mu^{\alpha}$ with $\kappa=50$ and $\alpha=1$. In Fig.~\ref{fig: theorem illustration}, we report the network mean-square-deviation (MSD) learning curves:
\vspace{-0.3 cm}
\begin{equation}
\vspace{-0.1 cm}
\label{eq: msd w.r. weta}
\text{MSD}(i)=\frac{1}{K}\sum_{k=1}^K\expec\|w^o_{k,\eta}-\bw_{k,i}\|^2,
\end{equation}
of  algorithm~\eqref{eq: algor}, for $3$ different values of the step-size $\mu=\{\mu_0/2,\mu_0,2\mu_0\}$. The results are averaged over $50$ Monte-Carlo runs. For a given $\eta$, we compute the reference vector $\cw_{\eta}^o$ in~\eqref{eq: global}, which is needed for the performance evaluation in~\eqref{eq: msd w.r. weta}, using the CVX solver~\cite{grant2014cvx} in deterministic settings where the data distribution is known. As it can be observed from Fig.~\ref{fig: theorem illustration},  in steady-state, the network MSD increases by approximately $3$ dB when $\mu$ goes from $\mu_0$ to $2\mu_0$. This
means that the performance is on the order of $\mu$, as expected from Theorem~\ref{theorem2}.
\begin{figure}
    \centering
    \includegraphics [scale=0.4]{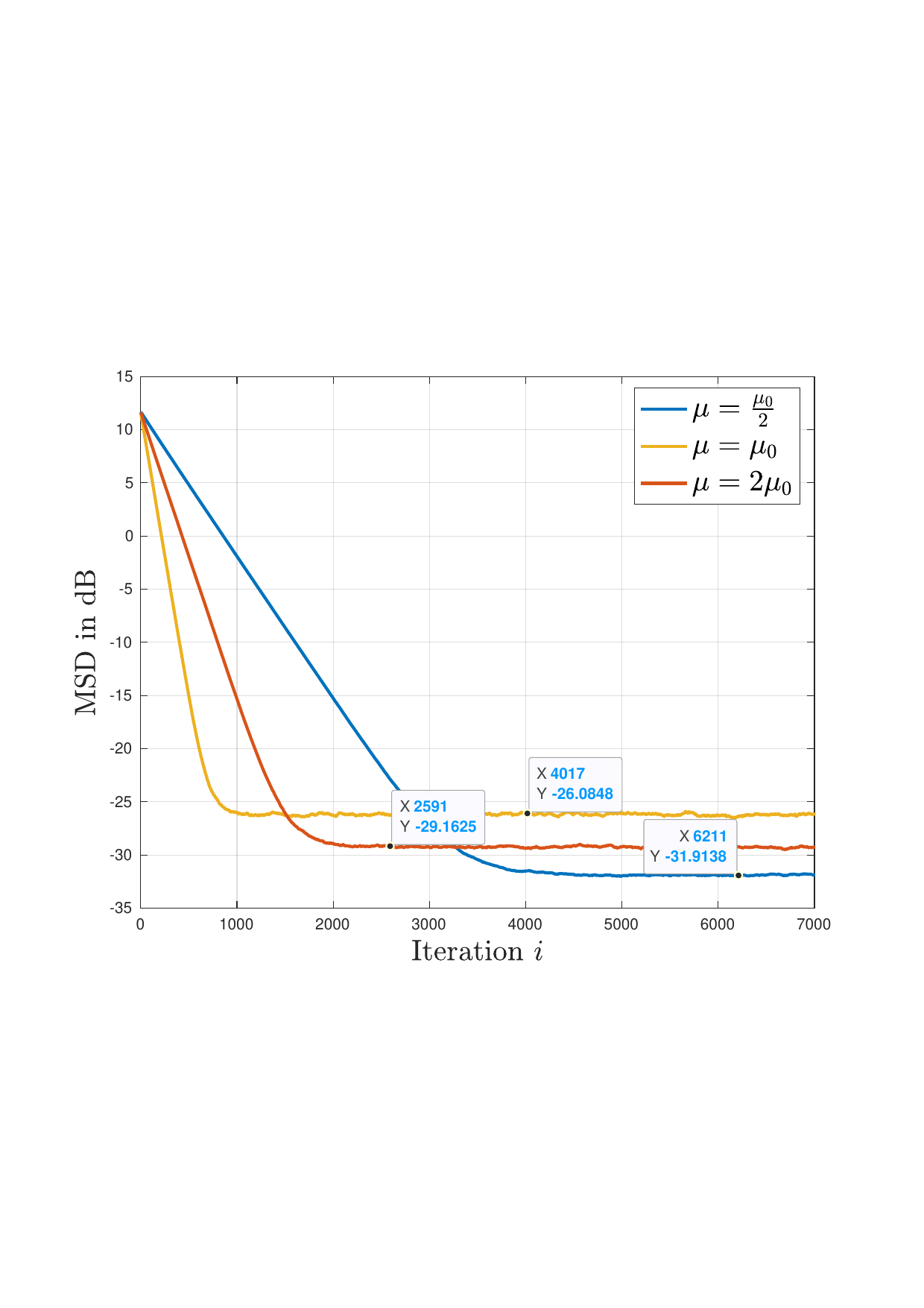}
    \caption{Performance of  algorithm~\eqref{eq: algor} w.r.t. $\cw^o_{\eta}$ in~\eqref{eq: global} when $\ell_1$-norm co-regularization is used. Evolution of the MSD learning curves~\eqref{eq: msd w.r. weta} for three different values of the step-size ($\mu_0=0.0025$).}
    \label{fig: theorem illustration}
\end{figure}
\begin{figure}
\begin{center}
\includegraphics[scale=0.24]{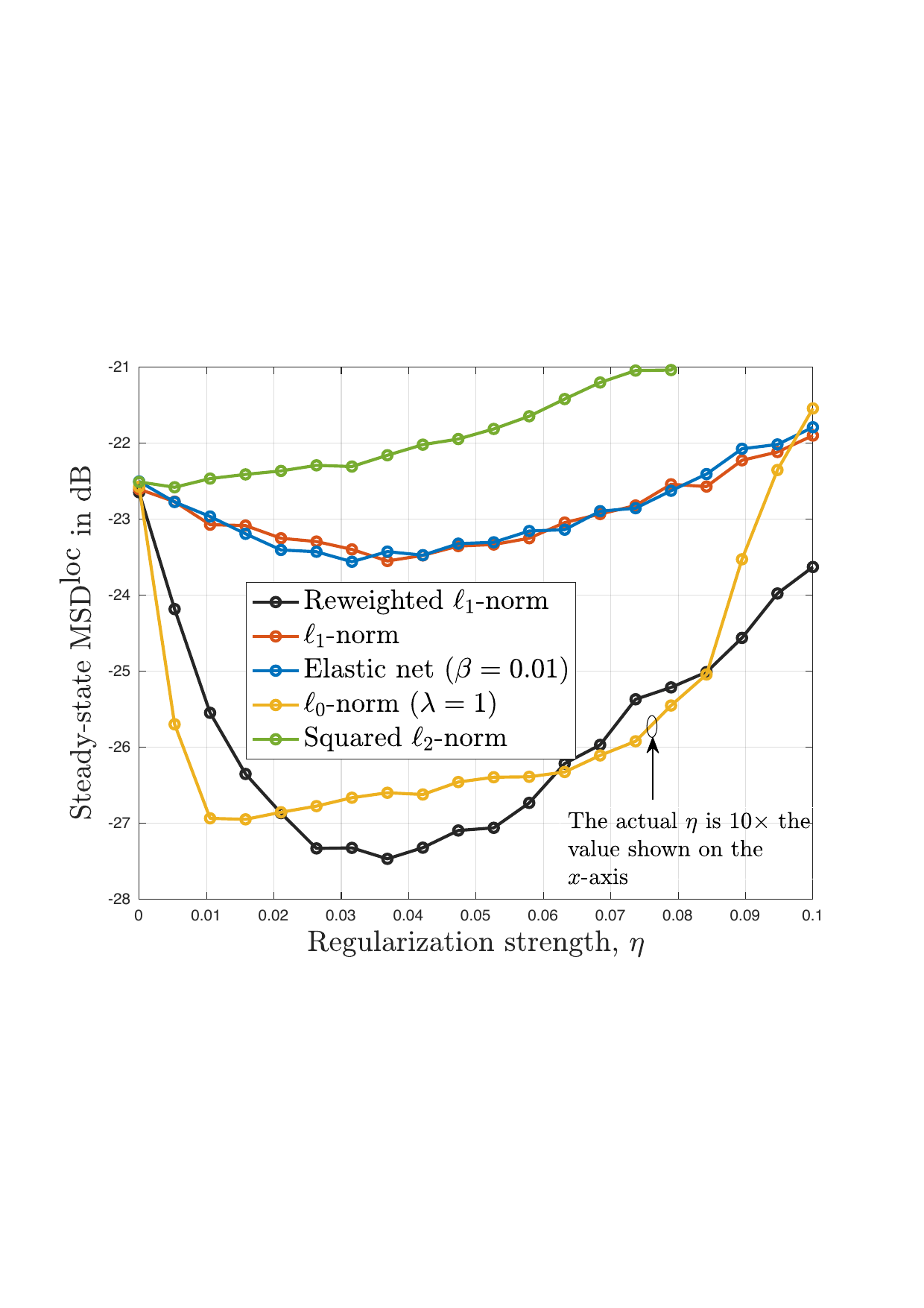}
\includegraphics [scale=0.24]{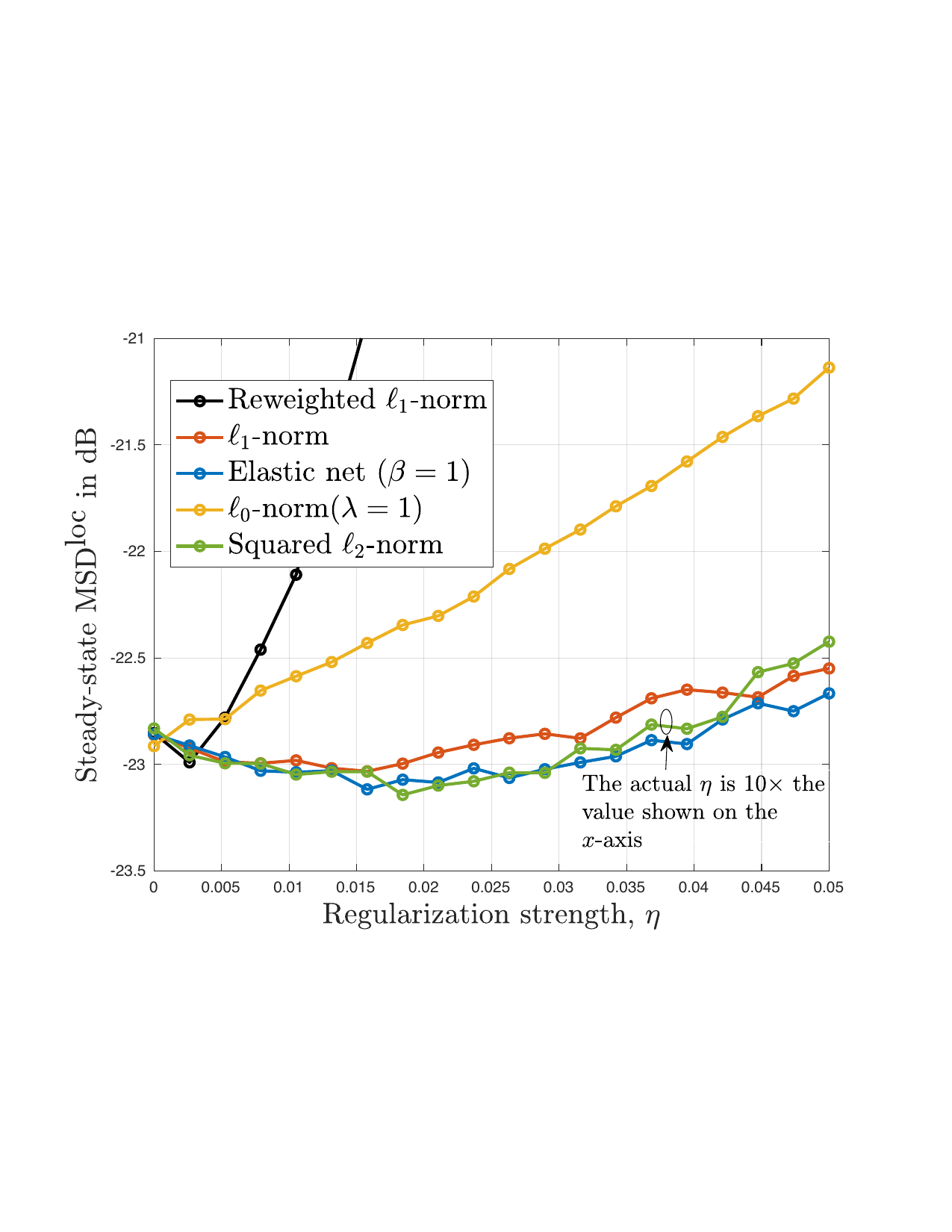}
  \caption{Benefit of collaborative learning evaluated in terms of the mean-square-error relative to the individual models $w^o_k$ in~\eqref{eq: local minimizers}. \emph{(Left)} Models $\{w^o_k\}$ differ sparsely across the nodes. \emph{(Right)} Models $\{w^o_k\}$ vary smoothly over the graph.}
 \label{fig:benefit of collaborative learning}
\end{center}
\end{figure}
\subsubsection{Benefit of collaborative learning}
To illustrate the benefit of collaborative learning, we assess the algorithm's performance w.r.t. the local minimizers $w^o_k$ in~\eqref{eq: local minimizers}, which correspond to the local models in~\eqref{eq: linear data model}. Specifically, rather than analyzing the MSD w.r.t. $\cw^o_{\eta}$ as in~\eqref{eq: msd w.r. weta}, we focus on the mean-square-error relative to $\{w^o_{k}\}$:
\vspace{-0.2 cm}
\begin{equation}
\vspace{-0.2 cm}
\label{eq: msd w.r. wek}
\text{MSD}^{\text{loc}}(i)=\frac{1}{K}\sum_{k=1}^K\expec\|w^o_{k}-\bw_{k,i}\|^2,
\end{equation} 
where the superscript ``loc'' is used to emphasize that the error is evaluated w.r.t. the local models. We fix the step-size $\mu=0.005$ and report the steady-state value of $\text{MSD}^{\text{loc}}(i)$ for different values of the regularization strength~$\eta$. We compare various forms of co-regularization. Specifically, we examine $\ell_1$-norm co-regularization, where $f_{k\ell}(w_k,w_{\ell})=\|w_k-w_{\ell}\|_1$ (see Sec.~\ref{subsec: reweighted l1-norm}); reweighted $\ell_1$-norm co-regularization ($\epsilon=0.1$), as described in Sec.~\ref{subsec: reweighted l1-norm};  $\ell_0$-norm co-regularization, where  $f_{k\ell}(w_k,w_{\ell})=\|w_k-w_{\ell}\|_0$ (see Sec.~\ref{subsec: l0 norm}); elastic net co-regularization, where $f_{k\ell}(w_k,w_{\ell})=\|w_k-w_{\ell}\|_1+\frac{\beta}{2}\|w_k-w_{\ell}\|^2$ (see Sec.~\ref{subsec: elastic net norm}); and squared $\ell_2$-norm co-regularization, where $f_{k\ell}(w_k,w_{\ell})=\|w_k-w_{\ell}\|_2^2$, which is studied in details in~\cite{Nassif2020learning} and is shown to be effective in settings where the individual models vary smoothly across the graph.

 In the first experiment (see Fig.~\ref{fig:benefit of collaborative learning} \emph{(left)}), the individual models $\{w^o_k\}$ are generated as explained in Sec.~\ref{subsec: illustrative simulations}, ensuring they differ sparsely across the nodes. The elastic net regularization parameter~$\beta$ in~\eqref{eq: elastic norm with beta} is set to $0.01$, and the $\ell_0$-norm parameter $\lambda$ in~\eqref{eq: showing lambda} is set to $1$. The global  regularization strength~$\eta$ is uniformly sampled from the interval $[0,0.1]$. For each value of $\eta$, the resulting reported MSD$^{\text{loc}}$ is obtained by averaging the instantaneous mean-square-deviation $\text{MSD}^{\text{loc}}(i)$ in~\eqref{eq: msd w.r. wek} over  $200$ samples after convergence of the algorithm (the expectation in~\eqref{eq: msd w.r. wek} is estimated using $30$ Monte-Carlo runs for all norms, except for the $\ell_0$-norm, where the expectation is evaluated over $400$ Monte-Carlo runs). As it can be observed from Fig.~\ref{fig:benefit of collaborative learning}  \emph{(left)}, and as expected, when the individual models differ sparsely across the nodes, sparsity-based co-regularizers (such as the reweighted $\ell_1$-norm and $\ell_0$-norm) tend to outperform smoothness-based norms.  Specifically,  a $5$dB improvement can be achieved through collaboration (using reweighted $\ell_1$-norm co-regularizers) compared to the non-cooperative solution (obtained by setting $\eta=0$).
 
 In the second experiment (see Fig.~\ref{fig:benefit of collaborative learning} \emph{(right})), we generate the individual models $\{w^o_k\}$ such that they vary smoothly over the graph by following the approach in~\cite[Sec.~III-C]{Nassif2018regularization}, with the smoothness parameter  set to $\tau=5$. The elastic net regularization parameter~$\beta$ in~\eqref{eq: elastic norm with beta}, and the $\ell_0$-norm parameter $\lambda$ in~\eqref{eq: showing lambda} are both set to $1$. The global  regularization strength~$\eta$ is uniformly sampled from the interval~$[0,0.05]$. Similarly to the previous experiment, for each value of $\eta$, the resulting reported MSD$^{\text{loc}}$ is obtained by averaging the instantaneous mean-square-deviation $\text{MSD}^{\text{loc}}(i)$ in~\eqref{eq: msd w.r. wek} over  $200$ samples after convergence of the algorithm (the expectation in~\eqref{eq: msd w.r. wek} is estimated using $50$ Monte-Carlo runs for all norms, except for the $\ell_0$-norm, where the expectation is evaluated over $400$ Monte-Carlo runs). As it can be observed from Fig.~\ref{fig:benefit of collaborative learning}  \emph{(right)}, and as expected, when the individual models vary smoothly across the graph, smoothness-based co-regularizers (such as elastic net regularization and squared $\ell_2$-norm) tend to outperform sparsity-based norms. In particular, the reweighted $\ell_1$-norm and $\ell_0$-norm coregularization are less likely to outperform the performance of the  non-cooperative solution.

\subsection{Simulation results with real dataset }
\begin{table*}[t]
\centering
\begin{tabular}{c|c c c c c c c c}
\hline\hline
& $\eta=0$ & $\eta=1$ & $\eta=4$ & $\eta=100$ &$\eta=1000$ & $\eta=2000$ & $\eta=5000$ & $\eta=10000$ \\ \hline
Prediction Error ($\ell_1$-norm) & $0.2801$ & $0.2387$ & $\mathbf{0.2239}$ & $0.2283$ & $0.2294$&$0.2302$ & $0.2308$ & $0.2308$   \\
Prediction Error (Elastic Net regularization)   & $0.2801$  & $0.2295$& $\mathbf{0.2233}$  & $0.2279$ & $0.2298$ &$0.2304$ & $0.2307$  & $0.2309$ \\
Prediction Error (Reweighted $\ell_1$-norm) & $0.2801$ & $0.2483$ & $0.2384$ & $0.2330$ & $0.2301$ &$0.2311$ & $\mathbf{0.2293}$ & $0.2307$  \\
Prediction Error (Squared $\ell_2$-norm) & $0.2801$ & $0.2425$ & $0.2317$ & $\mathbf{0.2251}$ & $0.2287$ & $0.2313$ & - & -  \\
\hline\hline
\end{tabular}
\caption{Prediction error \eqref{eq: prederror} for different values of regularization strength $\eta$ using various coregularization norms.}
\label{table}
\end{table*}

In this section, we simulate algorithm \eqref{eq: algor} 
on a real weather dataset corresponding to a collection of daily measurements (mean temperature, mean dew point, mean visibility, mean wind speed, maximum sustained wind speed, and rain or snow occurrence) taken from $2004$ to $2017$ at $K=139$ weather stations located around the continental United States \cite{Menne2018global} -- the same dataset was previously used in~\cite[Sec. IV]{Nassif2020learning} in the context of smooth graph-based regularization. We construct a representation weighted graph $\mathcal{G} = (\mathcal{N}, \mathcal{E}, A)$ based on the geographical distances between sensors, and the $4$-nearest neighbors rule as in~\cite{Sandryhaila2013discrete,Nassif2020learning}. 
Let $\boldh_{k,i} \in \mathbb{R}^M$ denote the feature vector at sensor $k$ and day $i$ composed of $M=5$ entries corresponding to the mean temperature, mean dew point, mean visibility, mean wind speed, and maximum sustained wind speed reported at day~$i$ at sensor~$k$. Let $\bgamma_k(i)$ denote a binary variable associated with the occurrence of rain (or snow) at node $k$ and day $i$, i.e, $\bgamma_k(i)=1$ if rain (or snow) occurred and $\bgamma_k(i)=-1$ otherwise. As in~\cite[Sec. IV]{Nassif2020learning}, the objective is to construct a classifier that allows us to predict whether it will rain (or snow) or not based on the knowledge of the feature vector $\boldh_{k,i}$. Each station could use logistic regression as explained in Example 1 to learn the individual classifier. However, as we will see in the following, collaborative learning has the potential to improve the the network's performance.

We split the dataset into a training set used to learn the decision rule $w_k^o$, and a test set from which $\widehat{\boldsymbol{\gamma}}_k(i)$ are generated for performance evaluation. The training dataset comprises data recorded at the stations in the interval $2004 - 2012$ (a total number of $D_a=3288$ days), and the test set contains data recorded in the interval $2012 - 2017$ (a total number of $D_t=1826$ days). We run strategy \eqref{eq: algor} over the training set $(i=1,\ldots,D_a)$  using the $\ell_1$-norm, the reweighted $\ell_1$-norm and the elastic net regularization for different values of $\eta$. We set $ \mu=5.10^{-4}$, $\rho=10^{-5}$, 
and $\beta=1$ for the elastic net regularization. We generate the first iterate $w_{k,0}$ from the Gaussian distribution $\mathcal{N}(0, I_{M})$. For each value of $\eta$, we report in Table~\ref{table} the prediction error over the test set defined as~\cite[Sec. IV]{Nassif2020learning}: 
\begin{equation}
\vspace{-0.1 cm}
\label{eq: prederror}
\frac{1}{K} \sum_{k=1}^{K} \frac{1}{D_t} \sum_{i=1}^{D_t=1826} \mathbb{I} [ \sign (\boldsymbol{h}_{k,i}^{\top} \widehat{w}_{k,\infty}) \neq \bgamma_k(i)],
\vspace{-0.1 cm}
\end{equation}
where $ \widehat{w}_{k,\infty}$ is the average of the last $200$ iterates generated by algorithm~\eqref{eq: algor} at agent $k$, and $\mathbb{I} [x]$ is the indicator function at $x$, namely, $\mathbb{I} [x]=1$ if $x$ is true and $0$ otherwise. Table \ref{table} shows that, through cooperation, the agents improve their performance. In the non-cooperative solution ($\eta=0$), where each agent minimizes its cost locally, the error is the largest. 
As $\eta$ increases, the network prediction error improves for all coregularizers up to a certain point, after which it starts to increase again. This is because, for very large values of $\eta$, the agents shift their focus towards estimating a common classifier, which can result in a decrease in accuracy. On the other hand, our results indicate that, for this application, the elastic net regularization outperforms the other norms. This is due to its ability to effectively balance smoothness and sparsity.
\section{Conclusion}
In this work, we considered multitask learning over networks, where each agent seeks to estimate its own parameter vector, and where the parameter vectors at neighboring agents are assumed to differ sparsely. We showed how sparsity-based coregularization can be employed to perform collaborative learning. To handle non-smooth regularization, we employed proximal-based approaches. We showed that, in the small step-size regime, and under some conditions on the individual cost functions, gradient vector approximations, regularization functions, and regularization strength, the network can approach the minimizer of the global regularized problem with arbitrarily good accuracy levels. Additionally, for a broader applicability and to achieve higher computational efficiency, we derived closed-form expressions for computing the proximal operator of weighted sum of different sparsity-based norms. Finally, simulation results illustrated the theoretical findings, the efficiency of the proposed approach, and the benefit of collaborative learning.

\begin{appendices}
\section{Proof that $\ell_1$-norm co-regularization satisfies Assumption~\ref{ass: ass3}}
\label{Appendix: A}
We start by writing  $r_k(w_{k},  \{w_{\ell}\}_{ \ell \in \cN_k})=  \sum_{ \ell \in \cN_k} p_{k\ell} \|w_k-w_{\ell}\|_1$ as: 
\vspace{-0.2 cm}
\begin{align}
\vspace{-0.2 cm}
r_k\left(w_{k}, \left\{w_{\ell}\right\}_{ \ell \in \cN_k}\right) &=\sum_{ \ell \in \cN_k} p_{k\ell} \left( \sum_{m=1}^{M} \left \vert [w_k]_m - [w_{\ell}]_m \right \vert  \right) \notag \\
&= \sum_{m=1}^{M} \phi_{k,m} \left( [w_k]_m, \{[w_{\ell}]_m\}_{ \ell \in \cN_k}  \right),
\vspace{-0.4 cm}
\end{align}
\vspace{-0.3 cm}
where
\begin {equation}
\vspace{-0.1 cm}
\phi_{k,m}\left ([w_k]_m, \{[w_{\ell}]_m\}_{ \ell \in \cN_k}  \right)= \sum_{ \ell \in \cN_k} p_{k\ell} \left \vert [w_k]_m - [w_{\ell}]_m \right \vert .
\end{equation}
The subgradient of $ r_k(w_k)$ can be computed component-wise:
\begin{equation}
\vspace{-0.1 cm}
\partial_{[w_k]_m  }r_k(w_k) =  \partial_{[w_k]_m  }  \phi_{k,m}\left([w_k]_m \right),  \qquad \forall m=1,...,M.
\end{equation}
For clarity of presentation, we shall derive the subdifferential of a function $h(\cdot)$ similar to $\bphi_{k,m}$ where:
 \begin{equation}
 \vspace{-0.1 cm}
 \label{eq: h(x)}
  h(x)=  \sum_{j=1}^{J}c_j h_j(x)=\sum_{j=1}^{J}c_j \vert x - b_j \vert,
  \vspace{-0.1 cm}
  \end{equation}
  with $c_j > 0$ for all $j$. Since $x \in \mathbb{R}$, $c_j$ are non-negative, we have \cite[Lemma 10]{Polyak2021basics}: 
\begin{equation}
\vspace{-0.1 cm}
\partial h(x)=  \partial \left( \sum_{j=1}^{J}c_j h_j(x)\right) =  \sum_{j=1}^{J}c_j  \partial h_j(x)= \sum_{j=1}^{J}c_j  \partial \vert x - b_j \vert,
\end{equation}
which, from eq.~$(35)$ in~\cite[Sec. II-D]{Nassif2016proximal}, can be shown to be bounded and, therefore, the set of subgradients of the function $ r_k(w_{k},  \{w_{\ell}\}_{ \ell \in \cN_k})$ with respect to $w_k$ is uniformly bounded.


\end{appendices}
\bibliographystyle{IEEEbib}
{\balance{
\bibliography{reference}}}

\begin{thebibliography}{10}

\bibitem{konecny2016federated}
J.~Kone{\v c}n{\'y}, H.~B. McMahan, F.~X. Yu, P.~Richtarik, A.~T. Suresh, and
  D.~Bacon,
\newblock ``{Federated Learning: S}trategies for improving communication
  efficiency,''
\newblock in {\em NIPS Workshop on Private Multi-Party Machine Learning},
  Barcelona, Spain, Dec. 2016.

\bibitem{li2020federated}
T.~Li, A.~K. Sahu, A.~Talwalkar, and V.~Smith,
\newblock ``Federated learning: {C}hallenges, methods, and future directions,''
\newblock {\em IEEE Signal Processing Magazine}, vol. 37, no. 3, pp. 50--60,
  2020.

\bibitem{kairouz2021advances}
P.~Kairouz, H.~B. McMahan, B.~Avent, A.~Bellet, M.~Bennis, $\ldots$, and
  S.~Zhao,
\newblock ``Advances and open problems in federated learning,''
\newblock {\em Foundations and Trends in Machine learning}, vol. 14, no. 1-2,
  pp. 1--210, 2021.

\bibitem{Sayed2014adaptation}
A.~H. Sayed,
\newblock ``Adaptation, learning, and optimization over networks,''
\newblock {\em Foundations and Trends in Machine Learning}, vol. 7, no. 4-5,
  pp. 311--801, 2014.

\bibitem{sayed2013diffusion}
A.~H. Sayed, S.-Y. Tu, J.~Chen, X.~Zhao, and Z.~J. Towfic,
\newblock ``Diffusion strategies for adaptation and learning over networks: an
  examination of distributed strategies and network behavior,''
\newblock {\em IEEE Signal Processing Magazine}, vol. 30, no. 3, pp. 155--171,
  2013.

\bibitem{Nassif2020multitask}
R.~Nassif, S.~Vlaski, C.~Richard, J.~Chen, and A.~H. Sayed,
\newblock ``Multitask learning over graphs: An approach for distributed,
  streaming machine learning,''
\newblock {\em IEEE Signal Processing Magazine}, vol. 37, no. 3, pp. 14--25,
  2020.

\bibitem{Nedic2009distributed}
A.~Nedic and A.~Ozdaglar,
\newblock ``Distributed subgradient methods for multi-agent optimization,''
\newblock {\em IEEE Transactions on Automatic Control}, vol. 54, no. 1, pp.
  48--61, 2009.

\bibitem{koloskova2019decentralized}
A.~Koloskova, S.~Stich, and M.~Jaggi,
\newblock ``Decentralized stochastic optimization and gossip algorithms with
  compressed communication,''
\newblock in {\em Proc. International Conference on Machine Learning}, Long
  Beach, California, 2019, vol.~97, pp. 3478--3487.

\bibitem{Plata-Chaves2017heterogeneous}
J.~Plata-Chaves, A.~Bertrand, M.~Moonen, S.~Theodoridis, and A.~M. Zoubir,
\newblock ``Heterogeneous and multitask wireless sensor networks---algorithms,
  applications, and challenges,''
\newblock {\em IEEE Journal of Selected Topics in Signal Processing}, vol. 11,
  no. 3, pp. 450--465, 2017.

\bibitem{tan2023towards}
A.~Z. Tan, H.~Yu, L.~Cui, and Q.~Yang,
\newblock ``Towards personalized federated learning,''
\newblock {\em IEEE Transactions on Neural Networks and Learning Systems}, vol.
  34, no. 12, pp. 9587--9603, 2023.

\bibitem{kulkarni2020survey}
V.~Kulkarni, M.~Kulkarni, and A.~Pant,
\newblock ``Survey of personalization techniques for federated learning,''
\newblock in {\em Proc. Fourth World Conference on Smart Trends in Systems,
  Security and Sustainability}, London, UK, 2020, pp. 794--797.

\bibitem{Finn2017Model}
C.~Finn, P.~Abbeel, and S.~Levine,
\newblock ``Model-agnostic meta-learning for fast adaptation of deep
  networks,''
\newblock in {\em International conference on machine learning}. PMLR, 2017,
  pp. 1126--1135.

\bibitem{Wang2017Federated}
K.~Wang, R.~Mathews, C.~Kiddon, H.~Eichner, F.~Beaufays, and D.~Ramage,
\newblock ``Federated evaluation of on-device personalization,''
\newblock {\em arXiv preprint arXiv:1910.10252}, 2019.

\bibitem{Smith2017Federated}
V.~Smith, C.~Chiang, M.~Sanjabi, and A.~Talwalkar,
\newblock ``Federated multi-task learning,''
\newblock {\em Advances in neural information processing systems}, vol. 30,
  2017.

\bibitem{Nassif2020learning}
R.~Nassif, S.~Vlaski, C.~Richard, and A.~H. Sayed,
\newblock ``Learning over multitask graphs -- {Part I: S}tability analysis,''
\newblock {\em IEEE Open Journal of Signal Processing}, vol. 1, pp. 28--45,
  2020.

\bibitem{Chen2014multitask}
J.~Chen, C.~Richard, and A.~H. Sayed,
\newblock ``Multitask diffusion adaptation over networks,''
\newblock {\em IEEE Transactions on Signal Processing}, vol. 62, no. 16, pp.
  4129--4144, 2014.

\bibitem{Koppel2017proximity}
A.~Koppel, B.~M. Sadler, and A.~Ribeiro,
\newblock ``Proximity without consensus in online multiagent optimization,''
\newblock {\em IEEE Transactions on Signal Processing}, vol. 65, no. 12, pp.
  3062--3077, 2017.

\bibitem{Cao2018distributed}
X.~Cao and K.~J.~R. Liu,
\newblock ``Distributed efficient optimization for general network cost
  minimization problems,''
\newblock in {\em IEEE International Conference on Communications}, Kansas
  City, MO, USA, 2018, pp. 1--6.

\bibitem{Nassif2018regularization}
R.~Nassif, S.~Vlaski, C.~Richard, and A.~H. Sayed,
\newblock ``A regularization framework for learning over multitask graphs,''
\newblock {\em IEEE Signal Processing Letters}, vol. 26, no. 2, pp. 297--301,
  2018.

\bibitem{Nassif2016proximal}
R.~Nassif, C.~Richard, A.~Ferrari, and A.~H. Sayed,
\newblock ``Proximal multitask learning over networks with sparsity-inducing
  coregularization,''
\newblock {\em IEEE Transactions on Signal Processing}, vol. 64, no. 23, pp.
  6329--6344, 2016.

\bibitem{Hallac2015network}
D.~Hallac, J.~Leskovec, and S.~Boyd,
\newblock ``Network {L}asso: {C}lustering and optimization in large graphs,''
\newblock in {\em Proc. ACM SIGKDD International Conference on Knowledge
  Discovery and Data Mining}, Sydney, Australia, 2015, pp. 387--396.

\bibitem{nassif2020adaptation}
R.~Nassif, S.~Vlaski, and A.~H. Sayed,
\newblock ``Adaptation and learning over networks under subspace constraints --
  {Part I: S}tability analysis,''
\newblock {\em IEEE Transactions on Signal Processing}, vol. 68, pp.
  1346--1360, 2020.

\bibitem{dilorenzo2020distributed}
P.~Di~Lorenzo, S.~Barbarossa, and S.~Sardellitti,
\newblock ``Distributed signal processing and optimization based on in-network
  subspace projections,''
\newblock {\em IEEE Transactions on Signal Processing}, vol. 68, pp.
  2061--2076, 2020.

\bibitem{Chen2014diffusion}
J.~Chen, C.~Richard, A.~O. Hero, and A.~H. Sayed,
\newblock ``Diffusion {LMS} for multitask problems with overlapping hypothesis
  subspaces,''
\newblock in {\em IEEE International Workshop on Machine Learning for Signal
  Processing}, Reims, France, 2014, pp. 1--6.

\bibitem{korres2010distributed}
G.~N. Korres,
\newblock ``A distributed multiarea state estimation,''
\newblock {\em IEEE Transactions on Power Systems}, vol. 26, no. 1, pp. 73--84,
  2010.

\bibitem{Kekatos2012distributed}
V.~Kekatos and G.~B. Giannakis,
\newblock ``Distributed robust power system state estimation,''
\newblock {\em IEEE Transactions on Power Systems}, vol. 28, no. 2, pp.
  1617--1626, 2012.

\bibitem{Plata-Chaves2015distributed}
J.~Plata-Chaves, N.~Bogdanovi{\'c}, and K.~Berberidis,
\newblock ``Distributed diffusion-based {LMS} for node-specific adaptive
  parameter estimation,''
\newblock {\em IEEE Transactions on Signal Processing}, vol. 63, no. 13, pp.
  3448--3460, 2015.

\bibitem{Mota2014distributed}
J.~F.~C. Mota, J.~M.~F. Xavier, P.~M.~Q. Aguiar, and M.~P{\"u}schel,
\newblock ``Distributed optimization with local domains: {A}pplications in
  {MPC} and network flows,''
\newblock {\em IEEE Transactions on Automatic Control}, vol. 60, no. 7, pp.
  2004--2009, 2014.

\bibitem{Nassif2017diffusion}
R.~Nassif, C.~Richard, A.~Ferrari, and A.~H. Sayed,
\newblock ``Diffusion {LMS} for multitask problems with local linear equality
  constraints,''
\newblock {\em IEEE Transactions on Signal Processing}, vol. 65, no. 19, pp.
  4979--4993, 2017.

\bibitem{Alghunaim2019distributed}
S.~A. Alghunaim and A.~H. Sayed,
\newblock ``Distributed coupled multiagent stochastic optimization,''
\newblock {\em IEEE Transactions on Automatic Control}, vol. 65, no. 1, pp.
  175--190, 2019.

\bibitem{Hua2017penalty}
F.~Hua, R.~Nassif, C.~Richard, and H.~Wang,
\newblock ``Penalty-based multitask estimation with non-local linear equality
  constraints,''
\newblock in {\em IEEE International Workshop on Computational Advances in
  Multi-Sensor Adaptive Processing}, Cura{\c c}ao, Dutch Antilles, 2017, pp.
  1--5.

\bibitem{combettes2005signal}
P.~L. Combettes and V.~R. Wajs,
\newblock ``Signal recovery by proximal forward-backward splitting,''
\newblock {\em Multiscale Modeling \& Simulation}, vol. 4, no. 4, pp.
  1168--1200, 2005.

\bibitem{Wee2013proximal}
I.~Wee, W. M.and~Yamada,
\newblock ``A proximal splitting approach to regularized distributed adaptive
  estimation in diffusion networks,''
\newblock in {\em Proc. IEEE International Conference on Acoustics, Speech and
  Signal Processing}, Vancouver, Canada, 2013, pp. 5420--5424.

\bibitem{bertsekas2011incremental}
D.~P. Bertsekas,
\newblock ``Incremental proximal methods for large scale convex optimization,''
\newblock {\em Mathematical programming}, vol. 129, no. 2, pp. 163--195, 2011.

\bibitem{Combettes2011proximity}
P.~L. Combettes, D.~D{\~u}ng, and B.~C. V{\~u},
\newblock ``Proximity for sums of composite functions,''
\newblock {\em Journal of Mathematical Analysis and applications}, vol. 380,
  no. 2, pp. 680--688, 2011.

\bibitem{Parikh2014proximal}
N.~Parikh and S.~Boyd,
\newblock ``Proximal algorithms,''
\newblock {\em Foundations and Trends in Optimization}, vol. 1, no. 3, pp.
  127--239, 2014.

\bibitem{attouch2013convergence}
H.~Attouch, J.~Bolte, and BF. Svaiter,
\newblock ``Convergence of descent methods for semi-algebraic and tame
  problems: proximal algorithms, forward--backward splitting, and regularized
  gauss--seidel methods,''
\newblock {\em Mathematical Programming}, vol. 137, no. 1, pp. 91--129, 2013.

\bibitem{jin2020online}
D.~Jin, J.~Chen, C.~Richard, and J.~Chen,
\newblock ``Online proximal learning over jointly sparse multitask networks
  with $\ell_{\infty,1}$ regularization,''
\newblock {\em EEE Transactions on Signal Processing}, vol. 68, pp. 6319--6335,
  2020.

\bibitem{Polyak1987introduction}
B.~T. Polyak,
\newblock {\em Introduction to {O}ptimization},
\newblock Optimization Software, New York, 1987.

\bibitem{Jensen1906fonctions}
J.~L. W.~V. Jensen,
\newblock ``Sur les fonctions convexes et les in{\'e}galit{\'e}s entre les
  valeurs moyennes,''
\newblock {\em Acta mathematica}, vol. 30, no. 1, pp. 175--193, 1906.

\bibitem{Zou2005regularization}
H.~Zou and T.~Hastie,
\newblock ``Regularization and variable selection via the elastic net,''
\newblock {\em Journal of the Royal Statistical Society Series B: Statistical
  Methodology}, vol. 67, no. 2, pp. 301--320, April 2005.

\bibitem{Polyak2021basics}
R.~A. Polyak,
\newblock {\em Basics in Linear and Convex Optimization}, pp. 145--232,
\newblock Springer International Publishing, Cham, 2021.

\bibitem{Beck2017first-order}
A.~Beck,
\newblock {\em {First-Order Methods in Optimization}},
\newblock Society for Industrial and Applied Mathematics (SIAM), Philadelphia,
  PA, 2017.

\bibitem{Candes2008enhancing}
E.~J. Candes, M.~B. Wakin, and S.~P. Boyd,
\newblock ``Enhancing sparsity by reweighted $\ell_1$ minimization,''
\newblock {\em Journal of Fourier Analysis and Applications}, vol. 14, pp.
  877--905, 2008.

\bibitem{grant2014cvx}
M.~Grant and S.~Boyd,
\newblock ``{CVX}: Matlab software for disciplined convex programming, version
  2.1,'' \url{https://cvxr.com/cvx}, Mar. 2014.

\bibitem{Menne2018global}
M.~J. Menne, C.~N. Williams, B.~E Gleason, J.~J. Rennie, and J.~H Lawrimore,
\newblock ``The global historical climatology network monthly temperature
  dataset, version 4,''
\newblock {\em Journal of Climate}, vol. 31, no. 24, pp. 9835--9854, 2018.

\bibitem{Sandryhaila2013discrete}
A.~Sandryhaila and J.~M.F. Moura,
\newblock ``Discrete signal processing on graphs,''
\newblock {\em IEEE Transactions on Signal Processing}, vol. 61, no. 7, pp.
  1644--1656, 2013.

\end{thebibliography}

\end{document}